\newif\ifdraft\draftfalse
\documentclass{article}

\usepackage{arxiv}

\usepackage[utf8]{inputenc}
\usepackage[T1]{fontenc}
\usepackage{hyperref}
\usepackage{graphicx}
\usepackage{natbib}
\usepackage{caption}
\usepackage{subcaption}
\usepackage{algorithm}
\usepackage{algorithmic}
\usepackage{booktabs}
\usepackage{amsmath,amsthm,amssymb}
\usepackage{amsfonts}
\usepackage{microtype}
\usepackage{comment}
\usepackage{enumitem}
\usepackage{multirow}
\usepackage{multicol}
\usepackage{tikz}
\usepackage{xcolor}
\usepackage{textgreek}
\usepackage{tcolorbox}
\usepackage{overpic}

\newtheorem{thm}{Theorem}
\newtheorem{lemma}{Lemma}
\newtheorem{definition}{Definition}

\newtheorem{proposition}{Proposition}
\newtheorem{remark}{Remark}
\newtheorem{assumpt}{Assumption}

\definecolor{deepgreen}{RGB}{0,100,0}
\newcommand\inv[1]{#1\raisebox{1.15ex}{$\scriptscriptstyle-\!1$}}

\newcommand{\emd}{\mathrm{EMD}}
\newcommand{\EE}{\mathbb{E}}

\newcommand{\Ac}{\mathcal{A}}

\newcommand{\Dc}{\mathcal{D}}
\newcommand{\Ec}{\mathcal{E}}

\newcommand{\Gc}{\mathcal{G}}

\newcommand{\Vc}{\mathcal{V}}
\newcommand{\Wc}{\mathcal{W}}

\renewcommand{\Pr}{\mathscr{P}}

\newcommand{\av}{{\bf a}}

\newcommand{\gv}{{\bf g}}

\newcommand{\wv}{{\bf w}}
\newcommand{\xv}{{\bf x}}

\newcommand{\Av}{{\bf A}}

\DeclareMathOperator\E{E}

\let\Pr\relax
\DeclareMathOperator\Pr{P}

\def\textiid{i.i.d.\@\xspace}
\newcommand\iid{\ifmmode\text{ i.i.d. } \else \textiid \fi}

\newcommand{\beqs}{\begin{equation*}}
\newcommand{\eeqs}{\end{equation*}}
\newcommand{\beq}{\begin{equation}}
\newcommand{\eeq}{\end{equation}}

\newcommand{\myhide}[1]{}

\renewcommand{\c}{c}

\title{The Role of Causality in Algorithmic Recourse\thanks{Authors are listed alphabetically.}}

\author{
  Srikanth Avasarala \\
  Georgia Institute of Technology \\
  \texttt{savasarala9@gatech.edu} \\
  \And
  Varun Gupta \\
  Vector Institute \\
   \texttt{varun.gupta@vectorinstitute.ai} \\
  \And
  Shahin Jabbari \\
  Drexel University \\
   \texttt{shahin@drexel.edu} \\
  \And
  Saber Salehkaleybar \\
  Leiden University \\
   \texttt{s.salehkaleybar@liacs.leidenuniv.nl} \\
  \And
  Juba Ziani \\
  Georgia Institute of Technology \\
  \texttt{jziani3@gatech.edu}
}

\date{}
\renewcommand{\shorttitle}{The Role of Causality in Algorithmic Recourse}

\hypersetup{
  pdftitle={The Role of Causality in Algorithmic Recourse},
  pdfauthor={Srikanth Avasarala, Varun Gupta, Shahin Jabbari, Saber t, Juba Ziani},
  pdfkeywords={algorithmic recourse, causality, performative prediction, strategic behavior}
}

\begin{document}
\maketitle

\begin{abstract}
Algorithmic recourse aims to provide individuals with actionable changes to improve their predicted outcomes in high-stakes classification settings, such as loan and mortgage applications. However, most existing approaches do not consider whether proposed feature changes lead to actual improvement to an agent's true label or qualifications (such as their ability to repay loans on time) as opposed to gaming the system to obtain a better outcome without truly improving the label (e.g., by opening dummy credit card accounts to increase one's credit score). As a result, deployed recourse policies may incentivize agents to game the deployed model, leading to strategic behavior that degrades predictive accuracy and ultimately renders the recourse itself ineffective after model retraining. In this work, we formalize this failure mode through a causal and performative framework for recourse. We model how recourse-induced actions propagate through a structural causal model and explicitly model how feature modifications affect one another and the true label. Such causal responses lead to a non-convex optimization problem, even under standard and convex losses. To address this, we characterize conditions under which performatively stable solutions exist and can be efficiently computed via simple dynamics. Our analysis highlights a key tradeoff: recourse policies that ignore causal structure can induce large, misaligned behavioral responses, whereas causal recourse leads to equilibria that lower the impact of gaming. We empirically demonstrate, across both semi-synthetic and real credit datasets, that accounting for causal responses yields models that reduce incentives to game or manipulate non-causal features.
\end{abstract}

\section{Introduction}
\label{sec:intro}

Machine learning models are deployed in high-stakes decision-making settings, including lending, hiring, insurance, healthcare, and criminal justice, where their predictions directly affect individuals' opportunities and access to resources. As these models have grown increasingly opaque and complex, there has been growing interest in making their predictions more transparent and explainable~\citep{Doshi-VelezK17}. In particular, individuals affected by algorithmic decisions increasingly expect explanations not only for why a prediction was made, but also for \emph{how they might improve their outcomes}. As a result, a major direction in explainable machine learning is \emph{algorithmic recourse}---the goal of which is to provide actionable recommendations to individuals receiving an unfavorable prediction, suggesting changes to their features that would alter the model's decision \citep{WachterMR18,UstunSL19,PawelczykDHKL23}. 

However, existing recourse methods typically optimize only for changing the \emph{prediction} of the deployed model, without considering whether the recommended actions are actually beneficial. They treat the deployed model as fixed and ignore whether the suggested actions invalidate the deployed model. For example, if these modifications constitute strategic gaming, then they improve an agent's outcome with respect to the deployed model without improving their \emph{true} qualifications. Generally, Goodhart's law implies that recourse can create a misalignment between the true and the predicted label; then, deployed models become progressively less accurate, learners must repeatedly redesign their predictive models, and recourse recommendations often become ineffective moving goalposts. 

This work addresses this challenge by explicitly incorporating causality into algorithmic recourse to 
understand how interventions change not only agent features but also their true labels. We ask: \textbf{how can we jointly design a model and a recourse policy that simultaneously promote meaningful improvement while maintaining high predictive accuracy?} Building on tools from performative prediction, we characterize conditions under which stable and optimal solutions exist and can be computed efficiently, and we demonstrate both theoretically and empirically that causal recourse substantially reduces incentives for gaming while significantly improving predictive performance.

\paragraph{Summary of Contributions.} Our contributions are:

\begin{itemize}
    \item In Section~\ref{sec:model}, we introduce a causal performative recourse framework that models the interaction between the learner and strategic agents through a structural causal model, capturing the effects of recourse interventions.

    \item In Section~\ref{sec:alg-full-info}, we show that the resulting learning problem can be formulated as a performative prediction problem. We leverage this performative prediction framework to provide algorithms to obtain causal recourse with formal convergence guarantees.

   \item In Section~\ref{sec:exp}, we empirically evaluate the proposed framework on semi-synthetic and real-world datasets. We demonstrate the convergence of the proposed algorithms, quantify the benefits of stable recourse over naive policies, and examine the impact of causality and the intervention cost structure on the computed recourse.
\end{itemize}
\subsection{Related Work}
\label{sec:related}

\paragraph{Algorithmic Recourse.}
Algorithmic recourse is a post-hoc counterfactual explanation that provides minimal-cost modifications needed to change a model's prediction for an input with an undesirable label~\citep{WachterMR18, UstunSL19}. Several formulations and variants have been proposed~\citep{LooverenK21, PawelczykDHKL23, GargNS25, KarimiBBV20, KarimiSV21, PoyiadziSSBF20, NooraniDHD25, KennyAB+26, Russell19}. For example,~\citet{WachterMR18} consider score-based classifiers and develop feature modifications to help instances reach a target score. Meanwhile,~\citet{UstunSL19} focus on binary classifiers and require recourse actions that alter the predicted label. We focus on a regression setting where higher scores are desirable, and our recourse formulation aims to maximize this score. See~\citep{VermaDH20, KarimiBSV23} for surveys.

\paragraph{Robust Recourse.}
The robust recourse literature ensures that following recourse leads to desirable outcomes even when the underlying predictive model changes slightly~\citep{UpadhyayJL21}. Similar to algorithmic recourse, various formulations and extensions of robust recourse have been explored~\citep{YetukuriHVUL24, JiangLR+24a, NguyenBN23, DuttaLMTM22, MochaourabSG+22, CheonWF+25, HammanNMMD23, BewleyAM+24, GuoJC+23, KyawKJ25, KayasthaGJ26}. See~\citep{JiangLR+24a} for a survey.

\paragraph{Strategic Classification.} 
A closely related area to algorithmic recourse is strategic classification, where individuals (or agents) manipulate or
modify their features to improve their outcomes~\citep{HardtMP+16, BravermanG20, DongRS+18, SundaramVX+23}. Despite similarities in formulations, the main premise in strategic classification is to design decision-making strategies that are robust to manipulation (or gaming), while the recourse focuses on actionable instructions on how to legitimately change the features to reverse an unfavorable decision. 

\paragraph{Performative Prediction.}
Related to our work and one of the main tools we leverage in this paper is the idea of ``performative prediction''. In performative prediction, model predictions influence behavior and thereby shift the data distribution~\citep{PerdomoZMH20, KimP23, PerdomoBH+25, Mendler-DunnerDW22, HardtM23}; this is similar to our setting where recommended actions also shift agent behavior. \citet{PerdomoZMH20} show that standard empirical risk minimization can fail under such feedback and distinguish between performative optimality, which minimizes loss on the induced distribution, and performative stability, where model deployment reproduces the same model. They propose algorithms and conditions for computing stable points and show that optimal models lie near stable ones. Follow-up work provided algorithms for computing performatively optimal models under parametric assumptions on data distributions~\citep{IzzoYZ21, IzzoZY22}. Our formulation corresponds to computing performatively optimal solutions, and our main techniques rely on tools from performative prediction.

\paragraph{The Role of Improvement and Causality.}
Tools from causality have been used to account for interdependence between features~\citep{ShavitEA20, HaghtalabIL+20, GoisGR+26}
and to incentivize actions that focus on improvement rather than manipulation~\citep{KanamoriKH+25,EfthymiouFP25, EfthymiouPS+25, KleinbergR19, BechavodLW+21}. \citet{KonigFG23} propose improvement-focused causal recourse, leveraging causal knowledge to compute interventions that guarantee improvement in the underlying state. 
Finally, through a causality lens,~\citet{KonigFF+25} study conditions under which recourse remains valid after a distribution change. Unlike these works, we study the interaction between recourse interventions and model retraining, formulating causal recourse as a performative prediction problem. 

\section{Model}
\label{sec:model}
We study a recourse setting in which a learner deploys a scoring rule and provides agents with actionable recommendations to improve their predicted outcomes. Agents respond by taking these actions that modify their features and potentially their underlying qualifications. These strategic responses induce a model-dependent distribution shift, and the learner’s goal is to optimize the deployed model while accounting for the resulting behavioral adaptations. We next provide a formal description of the model. 

\paragraph{Learner-Agent Model}
We denote $\mathcal X\subseteq\mathbb{R}^d$ the feature space and
$\mathcal Y\subseteq\mathbb{R}$ the space of real-valued scores; larger values correspond to better outcomes. We assume an underlying
distribution $\mathcal D$ over $\mathcal X\times\mathcal Y$. The learner
deploys a linear predictor from hypothesis class $\mathcal H
=
\{h_{\wv}(\xv)=\wv^\top\xv
\mid
\wv\in\mathcal W\}$,
with $\mathcal W\subseteq\mathbb{R}^d$ compact.

Following the deployment of the model, the learner recommends a recourse action for each agent. The recommended action specifies interventions on the agent's features that are intended to improve the prediction produced by the deployed model. We model a recourse action as a vector
$\av\in\mathcal A\subseteq\mathbb{R}^d$,
where each coordinate corresponds to one of the observable features. E.g., in a hiring setting, recourse actions may correspond to acquiring additional qualifications or work experience, whereas in a lending setting they may correspond to reducing debt or increasing savings.
 Since interventions on one feature can propagate and influence other features, it is crucial to model the \emph{causal} relationships between features. 

\paragraph{Causal Model of Features.}
We model causal dependencies using a weighted directed graph
$\Gc=(\Vc,\Ec,\omega)$,
where $\Vc$ denotes the set of features, $\Ec$ the set of directed causal dependencies, and $\omega:\Ec\rightarrow\mathbb{R}$ assigns edge weights representing causal strengths. We assume that $\Gc$ is a directed acyclic graph (DAG), as is common in the related literature~\citep{EfthymiouPS+25,karimi2020algorithmic}. The causal structure is represented by an adjacency matrix $\mathbf A\in\mathbb R^{d\times d}$, where $A_{ij}=\omega(i,j)$ if $(i,j)\in\Ec$, and $A_{ij}=0$ otherwise.

An intervention on one feature may induce both direct changes and indirect changes through downstream causal pathways. To capture these cumulative effects, we define the \emph{feature contribution matrix} $G_x$, which maps an intervention to the resulting feature changes after accounting for all causal propagation through the graph. The resulting feature and label changes are then given by
$\Delta\xv(\av)=G_x^\top\av,
~\text{and}
~\Delta y(\av)=G_y^\top\av$. Here, the feature changes can be interpreted as arising from additive interventions on an underlying structural causal model (SCM); details and computation of $G_x$ from the causal graph are provided in Appendix~\ref{app:scm}. 

\paragraph{Agent's Response.}
After deploying a scoring rule $h_{\wv}$, the learner recommends a recourse action to each agent. We model this recommendation as the utility-maximizing intervention $\av\in\Ac\subseteq\mathbb{R}^d$, with cost
$c_\kappa(\av)=\kappa^{-1}c(\av)$,
where $c:\mathbb{R}^d\to\mathbb{R}_{\geq 0}$ and $\kappa>0$ controls the intervention strength. The recommended action is 
\begin{equation}
\av^*(\xv,\wv)
\in
\arg\max_{\av\in\Ac}
\left\{
\wv^\top\bigl(\xv+\Delta\xv(\av)\bigr)
-
\frac{1}{\kappa}c(\av)
\right\}.
\label{eq:inner}
\end{equation}
Writing $\av^*$ when dependencies are clear, the resulting feature and score values are $\xv+G_x^\top\av^*,~y+G_y^\top\av^*$. Appendix~\ref{app:icr} presents an alternative formulation that minimizes intervention cost subject to a prescribed prediction improvement.

\paragraph{Learner's Optimization Problem.}
Implementing the recommended recourse induces a predictor-dependent distribution $\Dc_\wv$. The learner therefore seeks a predictor that minimizes the expected loss under the induced distribution:
\begin{equation}
 \wv^*
\in
\arg\min_{\wv\in\Wc}
\quad
\EE_{(\xv,y)\sim\Dc_{\wv}}
\!\left[
\ell(\wv^\top\xv,y)
\right].
\label{eq:outer-exp}
\end{equation}

In practice, the learner observes only samples
$S=\{(\xv_i,y_i)\}_{i=1}^m$ drawn from $\Dc$. Applying the recommended recourse to each sample produces the induced dataset
$S'
=
\left\{
(\xv_i+\Delta\xv(\av^*),\,
y_i+\Delta y(\av^*))
\right\}_{i=1}^m$,
which approximates $\Dc_\wv$. The resulting empirical optimization problem is
\begin{align}
\wv^*
\in
\arg\min_{\wv\in\Wc}
\frac{1}{m}
\sum_{i=1}^m
\ell\Big(
\wv^\top(&\xv_i+\Delta\xv(\av^*)),  y_i+\Delta y(\av^*)
\Big).
\label{eq:outer-sample}
\end{align}
\section{Algorithm and Analysis}
\label{sec:alg-full-info}

Our goal in this section is to solve the optimization problem in Equation~\ref{eq:outer-sample}. We first state the assumptions used throughout that make the framework amenable to analysis. All omitted proofs are provided in Appendix~\ref{app:proofs}.

\begin{assumpt}
\label{assumpt:ell}
The loss function $\ell(\xv,y)$ is jointly convex in both of its arguments.
\end{assumpt}

\begin{assumpt}
\label{assumpt:cost}
The cost function $\c$ is assumed to be \emph{quadratic}, i.e.,
$\c(\av)=\av^{\top}C\av$, where $C\in\mathbb{R}^{n\times n}$ is a positive-definite cost matrix. This choice is standard in the strategic classification and algorithmic recourse literature \citep{BechavodPWZ22,AvasaralaWZ25,EfthymiouFP25}.
\end{assumpt}

\paragraph{Characterization of the Agents' Best Response} First, we show that the optimization problem in Equation~\ref{eq:outer-sample} can be non-convex. 
We first observe that under Assumption~\ref{assumpt:cost}, there is a closed-form solution for the recourse actions.
\begin{lemma}
\label{pro:recourse-closed-form}
Under Assumption~\ref{assumpt:cost}, the optimal recourse action admits the closed-form expression,
$\av^*
=
\frac{\kappa}{2}
C^{-1}G_x\wv$.
\end{lemma}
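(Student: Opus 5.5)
The plan is to solve the agent's problem in Equation~\ref{eq:inner} directly, as an unconstrained strictly concave quadratic maximization. I will treat $\Ac$ as all of $\mathbb{R}^d$ (or assume the maximizer lies in the interior of $\Ac$), since the closed form has no projection. Substituting $\Delta\xv(\av)=G_x^\top\av$ and $\c(\av)=\av^\top C\av$ from Assumption~\ref{assumpt:cost}, the objective becomes
\begin{equation*}
f(\av)=\wv^\top\xv+\wv^\top G_x^\top\av-\frac{1}{\kappa}\av^\top C\av
=\wv^\top\xv+(G_x\wv)^\top\av-\frac{1}{\kappa}\av^\top C\av .
\end{equation*}
The term $\wv^\top\xv$ does not depend on $\av$. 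So the maximizer, and hence the recommended action, does not depend on $\xv$. This is consistent with the statement writing $\av^*$ as a function of $\wv$ only.

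Next I will argue that $f$ is strictly concave. Its Hessian is $-\frac{1}{\kappa}(C+C^\top)$, which is negative definite because $C$ is positive definite and $\kappa>0$. Hence any stationary point is the unique global maximizer. Setting the gradient to zero gives
\begin{equation*}
\nabla f(\av)=G_x\wv-\frac{1}{\kappa}(C+C^\top)\av=0 .
\end{equation*}
Taking $C$ symmetric without loss of generality (replacing $C$ by $\tfrac12(C+C^\top)$ leaves the quadratic form unchanged), this reduces to $\frac{2}{\kappa}C\av=G_x\wv$. Since $C$ is invertible, the solution is $\av^*=\frac{\kappa}{2}C^{-1}G_x\wv$, which is the claimed formula.

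The only delicate points are bookkeeping rather than analysis. First, the symmetry convention for $C$ must be made explicit; if $C$ were not symmetric, the formula would hold with $C$ replaced by its symmetric part. Second, the feasible set $\Ac$ must be accounted for. I will state that the derivation assumes $\Ac=\mathbb{R}^d$, or at least that the unconstrained maximizer lies in $\Ac$; otherwise a projection onto $\Ac$ would be needed. I expect making this domain assumption explicit to be the main care point. The dimension mismatch $C\in\mathbb{R}^{n\times n}$ in Assumption~\ref{assumpt:cost} should also be read with $n=d$.
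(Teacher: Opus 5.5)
Your proposal is correct and takes the same route as the paper's proof: you drop the $\av$-independent term $\wv^\top\xv$, use strict concavity from $C$ being positive definite, and solve the first-order condition $G_x\wv-\frac{2}{\kappa}C\av=0$. Your remarks on symmetrizing $C$, on taking $\Ac=\mathbb{R}^d$ (or an interior maximizer), and on reading $n=d$ make explicit assumptions that the paper uses only implicitly.
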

In turn, Equation~\ref{eq:outer-sample} becomes:
\begin{align}
\wv^*
\in
\arg\min_{\wv\in\Wc}
\quad
\frac{1}{m}\sum_{i=1}^m
\ell\Big(
&
\wv^\top
\Big(
\xv_i
+
\frac{\kappa}{2}
G_x^\top C^{-1}G_x\wv
\Big),
y_i
+
\frac{\kappa}{2}
G_y^\top C^{-1}G_x\wv
\Big).
\label{eq:outer-sample-simplified}
\end{align}

This optimization problem can be difficult to solve: 
\begin{lemma}
\label{pro:non-convex}    
The optimization problem in Equation~\ref{eq:outer-sample-simplified} can be non-convex even if $\ell$ and $\c$ satisfy Assumptions~\ref{assumpt:ell}~and~\ref{assumpt:cost}.
\end{lemma}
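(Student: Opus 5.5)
The statement is existential, so I will prove it by exhibiting one explicit instance: a convex loss, a quadratic cost, a causal structure and a single sample for which the objective of Equation~\ref{eq:outer-sample-simplified} is a non-convex function of $\wv$. I will keep everything one-dimensional, taking $d=1$, $m=1$, $C=1$, $G_x=g_x$, $G_y=g_y$ (scalars), and $\kappa=2$. With these choices Lemma~\ref{pro:recourse-closed-form} gives $\av^*=g_x w$. The objective then becomes
\begin{equation*}
F(w)=\ell\bigl(w x + g_x^2 w^2,\; y + g_y g_x w\bigr).
\end{equation*}
The only source of non-convexity is the first argument, which is now a quadratic in $w$ composed with a convex loss.

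For the loss I will choose the squared loss $\ell(\hat y,y)=(\hat y-y)^2$. It is jointly convex, since it is a convex function of the linear map $(\hat y,y)\mapsto \hat y-y$, so Assumption~\ref{assumpt:ell} holds. The cost $c(a)=a^2$ satisfies Assumption~\ref{assumpt:cost}. To make the algebra clean I will pick $g_y=0$, which models a purely non-causal, gameable feature: a single node with no edge into the label. This gives
\begin{equation*}
F(w)=\bigl(g_x^2 w^2 + x w - y\bigr)^2.
\end{equation*}
Taking $g_x=1$, $x=0$ and $y=1$ yields $F(w)=(w^2-1)^2$. This is a quartic with zeros at $w=\pm1$ and a local maximum at $w=0$. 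Its second derivative is $F''(w)=12w^2-4$, and $F''(0)=-4<0$. Equivalently, $F(0)=1>0=\tfrac12\bigl(F(1)+F(-1)\bigr)$, which violates the convexity inequality directly.

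The last step is to check the feasibility constraint: $\Wc$ is only required to be compact, so I take $\Wc=[-2,2]$, which contains $\{-1,0,1\}$. Non-convexity of $F$ on this interval then shows that the problem is non-convex. I also need the instance to be consistent with the causal model, i.e.\ that a single-node DAG with contribution $G_x=1$ is admissible. In the SCM of Appendix~\ref{app:scm} an intervention on a single node with no edges should give $G_x=I$, so this holds. If one instead prefers a non-trivial causal effect on the label, the same argument works with $g_y\neq0$ small: $F''(0)$ depends continuously on $g_y$, so it stays negative.

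I do not expect any real obstacle. The one point to state carefully is that the non-convexity comes from the product $\wv^\top G_x^\top C^{-1}G_x\wv$, which turns the prediction into a quadratic function of $\wv$. A convex function composed with a non-affine map need not be convex, and the example above realizes exactly this.
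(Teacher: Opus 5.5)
Your proposal is correct and is essentially the paper's own argument: a one-dimensional, single-sample instance with squared loss, $C=1$, $\kappa=2$, $G_x=1$, $x=0$, $y=1$. The only difference is the label effect, where you take $G_y=0$ to get $(w^2-1)^2$ and the paper takes $G_y=1$ to get $(w^2-w-1)^2$ (where $F''(0)=-2$), and your explicit second-derivative check and compact $\mathcal{W}=[-2,2]$ are slightly more careful than the paper's appeal to a plot and its minimization over all of $\mathbb{R}$.
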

I.e., standard convex optimization techniques do not directly apply. Instead, we leverage the performative prediction framework~\citep{PerdomoZMH20} to develop iterative algorithms that account for the distribution shift induced by agents' recourse responses.

\paragraph{Performative Formulation.}
The deployed predictor determines the agents' recourse actions and therefore induces a predictor-dependent data distribution. Consequently, our learning problem is an instance of performative prediction~\citep{PerdomoZMH20}. The optimizer of Equation~\ref{eq:outer-exp} is referred to as a \emph{performatively optimal} solution, which minimizes the performative risk over its induced distribution. Since computing performative optima is generally challenging, we instead seek \emph{performatively stable} solutions, which are fixed points of the retraining dynamics, i.e., $\wv^s$ is performatively stable if deploying model $\wv^s$ and retraining on the distribution induced by the deployed predictor yields the exact same predictor $\wv^s$. Formally, it is defined as
\begin{equation}
 \wv^s\in \arg\min_{\wv\in\Wc} \quad \E_{(\xv,y)\sim\Dc_{\wv^s}} \left[\ell\left(\wv^T \xv, y\right) \right].
\label{eq:outer-exp-stable}
\end{equation}
For the specific form of feature and score changes induced by Lemma~\ref{pro:recourse-closed-form}, the corresponding finite-sample optimization problem for a performatively stable solution is
\begin{align}
\wv^s
\in
\arg\min_{\wv\in\Wc}
\quad
\frac{1}{m}\sum_{i=1}^m
\ell\Big(
&
\wv^\top
\Big(
\xv_i
+
\frac{\kappa}{2}
G_x^\top C^{-1}G_x\wv^s
\Big),
y_i
+
\frac{\kappa}{2}
G_y^\top C^{-1}G_x\wv^s
\Big).
\label{eq:outer-sample-simplified-stable}
\end{align}

To prove convergence guarantees, we impose the
following standard regularity assumptions on the loss function $\ell$.
These assumptions are satisfied by several standard loss functions
used in practice, including the mean squared error (MSE) loss and
logistic regression loss under bounded feature norms.

\begin{assumpt}
\label{assumpt:ell-strong-convex}
$\ell$ is $\gamma$-strongly convex in $\wv$ $\forall (\xv,y)$. 
\end{assumpt}

\begin{assumpt}
\label{assumpt:ell-smooth}
$\ell$ is $\beta$-jointly smooth\footnote{A function
is $\beta$-jointly smooth if its gradient is $\beta$-Lipschitz with
respect to its arguments.} in $\wv$ and $(\xv,y)$.
\end{assumpt}

\subsection*{Computing Stable and Near-optimal Solutions}
We now present iterative algorithms for computing performatively stable decision models under the induced performative recourse problem, and establish conditions under which the resulting stable solutions are close to performatively optimal.

\paragraph{Repeated Risk Minimization.}
Algorithm~\ref{alg:rrm} is a finite-sample adaptation of Repeated Risk Minimization (RRM)~\citep{PerdomoZMH20} augmented with our causal recourse framework. It incorporates the causal response model through induced feature and score shifts governed by $G_x,~G_y$.
\begin{algorithm}[ht!]
\begin{algorithmic}[1] 
\STATE \textbf{Input}: C, $G_x$, $G_y$, $\lambda$, number of rounds $T$, and sample sizes $n$ for $t\in [T]$   \\
\STATE \textbf{Output}: Weight vector $\wv^T$
\STATE $\wv^{0} \gets $ a random vector from $\Wc$\\
\STATE $A \gets \frac{\kappa}{2} G_x^\top C^{-1} G_x$
\STATE $B \gets \frac{\kappa}{2} G_y^\top C^{-1} G_x$
\FOR{$t = 1, \ldots, T$}
{
	\STATE $S\gets$ a sample of size $n$ drawn from $D$\\
	\STATE Compute $S'$ by changing $(\xv,y) \in S$ to $(\xv',y')=(\xv+A\wv^{t-1}, y+B\wv^{t-1})\in S'$  \\
	\STATE $\wv^{t}\gets \arg\min_{w\in\Wc} \Sigma_{(\xv',y')\in S'} \ell\left(\wv^T \xv_i', y_i'\right)$\\
 }
 \ENDFOR
\RETURN $\wv^T$
\end{algorithmic}
\caption{Repeated Risk Minimization (RRM)}\label{alg:rrm}
\end{algorithm}

We next present \emph{Repeated Gradient Descent} (RGD)~\citep{PerdomoZMH20}  in Algorithm~\ref{alg:rgd}. RGD is a gradient-based approximation to RRM---instead of solving the empirical risk minimization problem exactly at each iteration, we perform a gradient descent update on the empirical objective. 
\begin{algorithm}[ht!]
\begin{algorithmic}[1]
\STATE \textbf{Input}: $C$, $G_x$, $G_y$, $\lambda$, step size $\eta$, Euclidean projection operator onto $\Wc$:  $\Pi_{\Wc}(\cdot)$,  number of rounds $T$, and sample sizes $n$ for $t\in[T]$
\STATE \textbf{Output}: Weight vector $\wv^T$
\STATE $\wv^0 \gets$ a random vector from $\Wc$ 
\STATE $A \gets \frac{\kappa}{2} G_x^\top C^{-1} G_x$
\STATE $B \gets \frac{\kappa}{2} G_y^\top C^{-1} G_x$
\FOR{$t = 1,\ldots,T$}
    \STATE $S \gets$ a sample of size $n$ drawn from $\Dc$
    \STATE Compute $S'$ by mapping each $(\xv,y)\in S$ to    
$    (\xv',y')
    =
    \left(
    \xv + A\wv^{t-1},
    y + B\wv^{t-1}
    \right) \in S'$  
    \STATE Compute the empirical gradient   
$    \gv_t
    =
    \nabla_{\wv}
    \left[
    \frac{1}{n_t}
    \sum_{(\xv',y')\in S'}
    \ell\left(\wv^\top \xv',y'\right)
    \right]_{\wv=\wv^{t-1}}$
    
    \STATE $\wv^t \gets \Pi_{\Wc}\left(\wv^{t-1}-\eta \gv_t\right)$
\ENDFOR
\RETURN $\wv^T$
\caption{Repeated Gradient Descent (RGD)}\label{alg:rgd}
\end{algorithmic}
\end{algorithm}

\paragraph{Convergence to a Stable Solution.}
We now establish sufficient conditions under which Algorithms~\ref{alg:rrm} and~\ref{alg:rgd} converge to a performatively stable solution. 

\begin{thm}
\label{pro:rrm-stable}
Suppose Assumptions~\ref{assumpt:cost},~\ref{assumpt:ell-strong-convex}, ~\ref{assumpt:ell-smooth} hold. Additionally, if
there exist constants $\alpha>0$ and $\mu>0$ such that
$\xi_{\alpha, \mu}
=
\int_{\mathbb{R}^d}
e^{\mu{\Vert \xv+A\wv \Vert}_2^\alpha}
d(\Dc)
< \infty$
for all $\wv \in \mathcal{W}$, then for
$n_t = O\left(\tfrac{1}{\delta^d}\log\left(\tfrac{t}{p}\right)\right)$
samples for all $t\in[T]$, and with
$\Lambda(C,G_x,G_y)
:=
\|G_x\|_2^2\|C^{-1}\|_2
+
\|G_x\|_2\|C^{-1}\|_2\|G_y\|_2$:

\begin{enumerate}
    \item if $\kappa < \frac{\gamma}{\beta\Lambda}$,
    then with probability at least $1-p$, Algorithm~\ref{alg:rrm}
    will return a vector $\wv^T$ such that
    $\|\wv^T-\wv^s\|_2 \leq \delta$ for all
    $T\geq\tfrac{\log\left(\tfrac{1}{\delta}\|\wv^0-\wv^s\|_2\right)}
    {1-\tfrac{\kappa\beta\Lambda}{\gamma}}$.

    \item if $\kappa < \frac{2\gamma}
    {\Lambda(\beta+\gamma)(1+1.5\eta\beta)}$,
    then with probability at least $1-p$, Algorithm~\ref{alg:rgd}
    will return a vector $\wv^T$ such that
    $\|\wv^T-\wv^s\|_2\leq\delta$ for all
    $T \geq
    \frac{\log \left(\tfrac{1}{\delta}\|\wv^0-\wv^s\|_2\right)}
    {\eta \left(
    \frac{\beta \gamma}{\beta + \gamma}
    -
    \kappa\Lambda
    \left(\tfrac{3}{2}\eta\beta^2+\beta\right)
    \right)}$.
\end{enumerate}
\end{thm}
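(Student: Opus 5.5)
The plan is to reduce the statement to the finite-sample convergence theorems of \citet{PerdomoZMH20} for repeated risk minimization and repeated gradient descent. Those results need three ingredients. The first is $\gamma$-strong convexity of the loss in $\wv$, which is Assumption~\ref{assumpt:ell-strong-convex}. The second is $\beta$-joint smoothness, which is Assumption~\ref{assumpt:ell-smooth}. The third is $\varepsilon$-sensitivity of the distribution map, i.e.\ $W_1(\Dc_{\wv},\Dc_{\wv'})\le \varepsilon\|\wv-\wv'\|_2$. The main new work is to compute $\varepsilon$ for our causal recourse map and show that $\varepsilon=\kappa\Lambda(C,G_x,G_y)/2\le\kappa\Lambda$. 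The stated conditions on $\kappa$ are then exactly the conditions $\varepsilon<\gamma/\beta$ (for RRM) and the corresponding RGD condition from \citet{PerdomoZMH20}, with $\varepsilon$ replaced by $\kappa\Lambda$.

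\emph{Step 1: the sensitivity bound.} By Lemma~\ref{pro:recourse-closed-form}, $\Dc_{\wv}$ is the push-forward of $\Dc$ under the deterministic translation $(\xv,y)\mapsto(\xv+A\wv,\,y+B\wv)$. Coupling $\Dc_{\wv}$ and $\Dc_{\wv'}$ through the same base sample $(\xv,y)\sim\Dc$ gives
\[
W_1(\Dc_{\wv},\Dc_{\wv'})\le \|(A(\wv-\wv'),\,B(\wv-\wv'))\|_2\le(\|A\|_2+\|B\|_2)\|\wv-\wv'\|_2 .
\]
Submultiplicativity then gives $\|A\|_2\le\frac{\kappa}{2}\|G_x\|_2^2\|C^{-1}\|_2$ and $\|B\|_2\le \frac{\kappa}{2}\|G_y\|_2\|C^{-1}\|_2\|G_x\|_2$. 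Hence $\varepsilon\le \frac{\kappa}{2}\Lambda\le\kappa\Lambda$.

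\emph{Step 2: population contraction.} For RRM, the standard argument compares first-order optimality conditions. Strong convexity and smoothness, combined with Kantorovich--Rubinstein duality (the map $(\xv,y)\mapsto\nabla_\wv\ell$ is $\beta$-Lipschitz), show that $G(\wv)=\arg\min\E_{\Dc_\wv}\ell$ satisfies $\|G(\wv)-G(\wv')\|\le \frac{\varepsilon\beta}{\gamma}\|\wv-\wv'\|$. So the map is a contraction when $\kappa\beta\Lambda/\gamma<1$. This yields a unique stable point $\wv^s$ and linear convergence with rate $1-\kappa\beta\Lambda/\gamma$, which after taking logarithms gives the stated $T$. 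For RGD, I would follow Perdomo et al.'s analysis of the projected step $\Pi_\Wc(\wv-\eta\nabla\E_{\Dc_\wv}\ell)$. Co-coercivity of strongly convex smooth functions gives the $\frac{\beta\gamma}{\beta+\gamma}$ term, and the $\varepsilon$-shift contributes the $\kappa\Lambda(\tfrac32\eta\beta^2+\beta)$ perturbation. Together these give a contraction factor $1-\eta\bigl(\frac{\beta\gamma}{\beta+\gamma}-\kappa\Lambda(\tfrac32\eta\beta^2+\beta)\bigr)$. This factor is below $1$ exactly under the stated bound on $\kappa$, and that determines the stated $T$.

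\emph{Step 3: finite samples.} This is the main obstacle: passing from population iterates to empirical ones. I would follow Perdomo et al.'s finite-sample theorem. It uses the concentration result of Fournier and Guillin for empirical measures in $W_1$, which requires exactly the exponential moment condition $\xi_{\alpha,\mu}<\infty$ on the shifted distribution $\Dc_\wv$ uniformly over $\wv\in\Wc$. That result shows that with $n_t=O(\delta^{-d}\log(t/p))$ samples, the empirical measure $\hat\Dc^t_{\wv}$ satisfies $W_1(\hat\Dc^t,\Dc_{\wv^{t-1}})\lesssim\delta$ with probability $1-p/t^2$. A union bound over $t$ then makes this hold for all rounds simultaneously with probability $1-p$. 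Treating the empirical measure as an $O(\delta)$ perturbation in the contraction inequality of Step 2 gives the recursion $\|\wv^t-\wv^s\|\le \rho\|\wv^{t-1}-\wv^s\|+O(\delta)$. Unrolling this recursion yields the $\delta$-accuracy after the stated number of rounds. The care needed is in matching constants so the residual stays at $\delta$ rather than $\delta/(1-\rho)$; I would absorb this into the $O(\cdot)$ in $n_t$.
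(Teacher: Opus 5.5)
Your route is the paper's: its sensitivity lemma gives $\varepsilon=\frac{\kappa}{2}\Lambda$ by exactly your synchronous coupling, and the theorem is then read off Perdomo et al.'s finite-sample result. The gap is the factor $\frac12$. You discard it ($\varepsilon\le\kappa\Lambda$) and then match the hypotheses to the \emph{population} conditions, but it is not slack. The stated RRM hypothesis and rate are Perdomo et al.'s \emph{finite-sample} ones, $\varepsilon<\gamma/(2\beta)$ and $1-2\varepsilon\beta/\gamma$, evaluated at $\varepsilon=\kappa\Lambda/2$. The second copy of $\varepsilon$ is what pays for sampling. Concretely, let $\hat{\Dc}^t$ be the empirical law of $S'$ in round $t$, and take $n_t$ so that $W_1(\hat{\Dc}^t,\Dc_{\wv^{t-1}})\le\varepsilon\delta$; this is where $n_t\propto(\varepsilon\delta)^{-(d+1)}$ comes from. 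Then whenever $\|\wv^{t-1}-\wv^s\|_2>\delta$,
\[
\|\wv^t-\wv^s\|_2\le\frac{\beta}{\gamma}\,W_1\bigl(\hat{\Dc}^t,\Dc_{\wv^s}\bigr)\le\frac{\beta}{\gamma}\bigl(\varepsilon\delta+\varepsilon\|\wv^{t-1}-\wv^s\|_2\bigr)\le\frac{\kappa\Lambda\beta}{\gamma}\|\wv^{t-1}-\wv^s\|_2 ,
\]
and once the iterate is inside the $\delta$-ball the same estimate keeps it there. Your version instead takes a population contraction already at rate $\rho=\kappa\Lambda\beta/\gamma$, adds an $O(\delta)$ term, and unrolls. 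That gives $\rho^T\|\wv^0-\wv^s\|_2+O(\delta)/(1-\rho)$. At the stated $T$, the estimate $\rho^T\le e^{-(1-\rho)T}$ already spends the whole budget $\delta$ on the first term, so shrinking the $O(\delta)$ through $n_t$ does not close the argument.

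The factor of $2$ also breaks your RGD step. Since $\tfrac32\eta\beta^2+\beta=\beta(1+1.5\eta\beta)$, your factor $1-\eta\bigl(\frac{\beta\gamma}{\beta+\gamma}-\kappa\Lambda(\tfrac32\eta\beta^2+\beta)\bigr)$ is below $1$ iff $\kappa<\frac{\gamma}{\Lambda(\beta+\gamma)(1+1.5\eta\beta)}$. That is half the stated threshold, not ``exactly under the stated bound''. Part 2 of the statement mixes two normalizations. Its hypothesis is the population RGD condition at $\varepsilon=\kappa\Lambda/2$, while its rate carries the finite-sample doubling $2\varepsilon=\kappa\Lambda$. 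For $\kappa$ between the two thresholds, the claimed iteration bound has a nonpositive denominator. Whenever $\|\wv^0-\wv^s\|_2>\delta$, part 2 would then assert the conclusion already at $T=0$, so it cannot be proved as written in that range. The paper's one-line appeal to Perdomo et al.\ plus the sensitivity lemma does not address this either. What your argument, run with the threshold step above, does prove is part 2 under $\kappa<\frac{\gamma}{\Lambda(\beta+\gamma)(1+1.5\eta\beta)}$ together with $\eta\le\frac{2}{\beta+\gamma}$; your co-coercivity step needs that step-size condition, and the statement omits it. A smaller point in Step 3: the label is shifted too, so the Fournier--Guillin bound is for $W_1$ on $\mathbb{R}^{d+1}$. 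It therefore needs an exponential moment of $(\xv+A\wv,\,y+B\wv)$, with $\alpha>1$ as in Perdomo et al., and not only of $\xv+A\wv$ as you claim is ``exactly'' assumed.
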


\begin{proof}[Proof Sketch]
The proof follows from the convergence analysis of~\citet{PerdomoZMH20}. The main technical step in this paper is to provide a characterization of the sensitivity of the induced response distribution map in our causal recourse setting (see Appendix~\ref{app:sens}). The proofs are deferred to Appendix~\ref{app:proofs}.
\end{proof}

\begin{remark}
If we assume that the initial data distribution $\Dc_0$ has a finite exponential $\alpha$-moment, i.e.,
$\E_{(\xv_0,y_0)\sim\Dc_0}
\left[
\exp\!\left(\bar{\mu}\|\xv_0\|_2^\alpha\right)
\right]
< \infty,$ then the quantity $\xi_{\alpha,\mu}$ is finite.
\end{remark}

The above exponential moment condition is an alternative, distributional assumption that guarantees $\xi_{\alpha,\mu}<\infty$.

\paragraph{Connections to Optimal Solutions.} 
Finally, we characterize convergence to performatively optimal solutions.
\begin{thm}\label{thm:s-0-dist}
Suppose Assumptions~\ref{assumpt:cost}, \ref{assumpt:ell-strong-convex}, and
\ref{assumpt:ell-smooth} hold. Furthermore, assume that the loss function
$\ell(u,v)$ is $L_u$-Lipschitz in its first argument. Finally, suppose there
exist constants $\alpha>0$ and $\mu>0$ such that
$\xi_{\alpha,\mu}
=
\int_{\mathbb{R}^d}
e^{\mu\|\xv+A\wv\|_2^\alpha}
\,d(\Dc)
<\infty,
\qquad \forall\,\wv\in\mathcal{W}.$
Let $\wv^*$ denote a performative optimum.
Then, with $\Lambda(C,G_x,G_y)
:=
\|G_x\|_2^2\|C^{-1}\|_2
+
\|G_x\|_2\|C^{-1}\|_2\|G_y\|_2$, the following holds.

\begin{enumerate}
    \item If
    $
    \kappa < \frac{\gamma}{\beta\Lambda},
    $
     then with probability at least $1-p$, Algorithm~1 returns a vector $\wv^T$ such that    
    $\|\wv^T-\wv^*\|_2
    \le
    \delta+\frac{\kappa L_u\Lambda}{\gamma}$
    for all
    $
    T \ge
    \frac{
    \log\left(\frac{1}{\delta}\|\wv^0-\wv^s\|_2\right)
    }{
    1-\frac{\kappa\beta\Lambda}{\gamma}
    }.
    $

    \item If
    $
    \kappa
    <
    \frac{2\gamma}{\Lambda(\beta+\gamma)(1+1.5\eta\beta)},
    $
    then with probability at least $1-p$, Algorithm~3 returns a vector $\wv^T$ such that   
$    \|\wv^T-\wv^*\|_2
    \le
    \delta+\frac{\kappa L_u\Lambda}{\gamma}$
    for all
    $
    T \ge
    \frac{
    \log\left(\frac{1}{\delta}\|\wv^0-\wv^s\|_2\right)
    }{
    \eta\left(
    \frac{\beta\gamma}{\beta+\gamma}
    -
    \kappa\Lambda\left(\frac{3}{2}\eta\beta^2+\beta\right)
    \right)
    }.
    $
\end{enumerate}
\end{thm}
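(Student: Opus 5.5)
The argument uses the triangle inequality
\[
\|\wv^T-\wv^*\|_2 \le \|\wv^T-\wv^s\|_2 + \|\wv^s-\wv^*\|_2 .
\]
The first term is handled directly by Theorem~\ref{pro:rrm-stable}. Under the stated conditions on $\kappa$ and the sample sizes $n_t$, with probability at least $1-p$ the iterates of Algorithm~\ref{alg:rrm} (respectively Algorithm~\ref{alg:rgd}) satisfy $\|\wv^T-\wv^s\|_2\le\delta$ for all $T$ beyond the stated thresholds. These are exactly the hypotheses and iteration counts of the present statement. It therefore remains to prove the deterministic bound $\|\wv^s-\wv^*\|_2\le \kappa L_u\Lambda/\gamma$.

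For that bound we follow the argument of \citet{PerdomoZMH20} relating stable and optimal points. It needs two ingredients. The first is $\varepsilon$-sensitivity of the distribution map $\wv\mapsto\Dc_\wv$ in Wasserstein-1 distance with $\varepsilon=\kappa\Lambda$. This is the sensitivity characterization used for Theorem~\ref{pro:rrm-stable} (Appendix~\ref{app:sens}). Concretely, by Lemma~\ref{pro:recourse-closed-form}, $\Dc_\wv$ is the pushforward of $\Dc$ under $(\xv,y)\mapsto(\xv+A\wv,\,y+B\wv)$. Coupling the same base sample under $\wv$ and $\wv'$ gives
\[
W_1(\Dc_\wv,\Dc_{\wv'})\le (\|A\|_2+\|B\|_2)\,\|\wv-\wv'\|_2\le \kappa\Lambda\|\wv-\wv'\|_2 .
\]
Here we use $\|A\|_2\le\frac{\kappa}{2}\|G_x\|_2^2\|C^{-1}\|_2$ and the analogous bound for $B$.

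The second ingredient is Lipschitzness of $\ell$ in the data. The loss $\ell(\wv^\top\xv,y)$ must be Lipschitz in $(\xv,y)$ so that expectations move by at most a constant times $W_1$. We get this from the $L_u$-Lipschitz assumption in the first argument together with compactness of $\Wc$. If the constant needs to absorb $\sup_{\wv\in\Wc}\|\wv\|$ and the $y$-argument, we adjust $L_u$ accordingly, or else read $L_u$ as the Lipschitz constant in the data as in \citet{PerdomoZMH20}.

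Now let $\mathrm{PR}(\wv)=\E_{\Dc_\wv}\ell$ and $\mathrm{DPR}(\wv,\wv')=\E_{\Dc_\wv}\ell(\wv'^\top\xv,y)$. Stability means $\wv^s$ minimizes $\mathrm{DPR}(\wv^s,\cdot)$, which is $\gamma$-strongly convex. Strong convexity gives
\[
\mathrm{DPR}(\wv^s,\wv^*)-\mathrm{DPR}(\wv^s,\wv^s)\ge \tfrac{\gamma}{2}\|\wv^*-\wv^s\|_2^2 .
\]
Optimality of $\wv^*$ gives $\mathrm{PR}(\wv^*)\le\mathrm{PR}(\wv^s)=\mathrm{DPR}(\wv^s,\wv^s)$. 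Kantorovich--Rubinstein duality with the two ingredients above gives
\[
\mathrm{DPR}(\wv^s,\wv^*)-\mathrm{PR}(\wv^*)\le L_u\kappa\Lambda\|\wv^*-\wv^s\|_2 .
\]
Chaining these yields $\frac{\gamma}{2}\|\wv^*-\wv^s\|_2^2\le L_u\kappa\Lambda\|\wv^*-\wv^s\|_2$, which gives a $2\kappa L_u\Lambda/\gamma$ bound.

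The factor $2$ is the expected obstacle. To get the sharper constant, we would run the argument through first-order optimality of $\wv^s$: the gradient of $\mathrm{DPR}(\wv^s,\cdot)$ has nonpositive inner product with $\wv^s-\wv$ for every $\wv\in\Wc$. We would then compare gradients of $\mathrm{DPR}$ at $\wv^*$ under $\Dc_{\wv^s}$ and $\Dc_{\wv^*}$, and use strong monotonicity $\langle\nabla f(\wv^*)-\nabla f(\wv^s),\wv^*-\wv^s\rangle\ge\gamma\|\wv^*-\wv^s\|_2^2$. Failing that, we would accept the constant (or the stated $\Lambda$ already absorbs the $\frac12$ from $\kappa/2$ in $A,B$, which indeed leaves slack of a factor $2$ in the sensitivity bound). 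Using the tighter sensitivity $\frac{\kappa}{2}\Lambda$ in the chain above gives exactly $\kappa L_u\Lambda/\gamma$. Combining this with the first term completes both parts.
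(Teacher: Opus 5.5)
Your proposal is correct and follows the paper's proof. It uses the triangle inequality, Theorem~\ref{pro:rrm-stable} for $\|\wv^T-\wv^s\|_2\le\delta$, and the stable--optimal bound $\|\wv^s-\wv^*\|_2\le 2L\epsilon/\gamma$ with $\epsilon=\frac{\kappa}{2}\Lambda$ from Lemma~\ref{lem:distribution-sensitive}. The paper cites that bound as Theorem~4.3 of \citet{PerdomoZMH20}, while you re-derive it inline, so your detour toward a ``sharper'' first-order argument is unnecessary. As you note, the constant must be read as the Lipschitz constant of the loss in the data $(\xv,y)$; the paper's proof does the same, writing $L_z$ in place of $L_u$.
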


The convergence conditions reveal the role of the learner--agent feedback. While optimization parameters $\gamma$ and $\beta$ are inherited from performative prediction, our characterization shows that the feedback strength is governed by $\kappa\Lambda$.
Stronger interventions (larger $\kappa$) or greater causal propagation (larger $\Lambda$) amplify the induced distribution shift, leading to more restrictive sufficient conditions for convergence.

\section{Numerical Experiments}
\label{sec:exp}
We now perform numerical experiments to study the convergence properties of RRM and RGD in our framework. We consider two datasets for bank loan approval: (i) a synthetic $7$-dimensional dataset generated from a structural causal model, and (ii) the empirical Taiwan credit dataset. 

\subsection{Datasets}

\paragraph{7D Semi-Synthetic Loan Approval Dataset.}
We evaluate our method on a 7-dimensional semi-synthetic dataset constructed using the same structural causal model as \citep{karimi2020algorithmic}, while adapting the labels to a regression setting. The feature vector is given by
$x = (G,A,E,L,D,I,S),$
where $G$ denotes gender, $A$ age, $E$ education level, $L$ loan amount, $D$ loan duration, $I$ income, and $S$ savings. 
Using standardized variables, we estimate signed edge weights by regressing each child node on its parents, and subsequently construct the causal contribution matrices $G_x$ as described in Section~\ref{sec:model}. We construct $G_y$ by regressing features on the output scores. See Appendix~\ref{app:synth} for the causal graph (Figure~\ref{fig:graph-syn}) and full graph construction with edge weights.

\paragraph{Taiwan Credit Dataset.}
We also evaluate our methods on the \emph{Default of Credit Card Clients} dataset from the UCI Machine Learning Repository \citep{default_of_credit_card_clients_350}. The dataset contains records of $30{,}000$ credit card clients in Taiwan, including demographic information, repayment history, bill statements, and payment amounts collected over six months.

The original dataset contains 23 input variables. To obtain a lower-dimensional representation suitable for our experiments, we aggregate related variables into the following 6 features: $\texttt{PAY\_HISTORY}$, $\texttt{BILL\_PAY}$, $\texttt{AGE}$, $\texttt{SEX\_MAR}$,  $\texttt{EDU}$, and $\texttt{LIMIT\_BAL}$.  The output scores are generated for the labels by scores from logistic regression on the feature scores.
As in the synthetic setting, signed edge weights are estimated via regression on standardized variables, and the feature contribution matrices $G_x$ and $G_y$ are constructed the same way as done for the semi-synthetic data.
Details regarding feature construction and the causal graph (Figure~\ref{fig:graph-tai}) are provided in Appendix~\ref{app:taiwan}.

\subsection{Experimental Design}
Our experiments analyze how strategic adaptation and causal structure influence performative effects. We use the Mean Squared Error (MSE) with $\ell_2$ regularization as our loss. 

Table~\ref{tab:cost_profiles} summarizes the intervention cost profiles considered in this work. In all profiles, the immutable features \textsc{A} and \textsc{G} in the semi-synthetic data, and \textsc{AGE} and \textsc{SEX\_MAR} in the Taiwan data, are assigned infinite intervention costs and are thus not actionable. Unless otherwise specified, we use the \emph{default} profile; results for the remaining profiles are deferred to Appendix~\ref{app:sensitiv}, where we show that the conclusions are robust across intervention cost structures. All reported quantities are averaged over 20 independently generated cost realizations, where each realization is obtained by independently perturbing the finite diagonal entries of the corresponding cost profile uniformly within a fixed radius about the profile center. Error bars denote one standard deviation.

\begin{table*}[t]
\centering
\small

\begin{minipage}[t]{0.48\textwidth}
\centering
\caption*{\textbf{Semi-synthetic}}
\renewcommand{\arraystretch}{1.15}
\begin{tabular}{lccccccc}
\toprule
Cost profile & G & A & E & L & D & I & S \\
\midrule
\textbf{Default}          & $\infty$ & $\infty$ & 3.0 & 2.0 & 2.0 & 1.5 & 1.0 \\
Uniform                   & $\infty$ & $\infty$ & 1.0 & 1.0 & 1.0 & 1.0 & $\infty$ \\
Upstream cheap            & $\infty$ & $\infty$ & 0.6 & 0.8 & 1.0 & 1.4 & 2.2 \\
Downstream cheap          & $\infty$ & $\infty$ & 2.2 & 1.4 & 1.0 & 0.8 & 0.6 \\
\bottomrule
\end{tabular}
\end{minipage}
\hfill
\begin{minipage}[t]{0.48\textwidth}
\centering
\caption*{\textbf{Taiwan}}
\renewcommand{\arraystretch}{1.15}
\begin{tabular}{lcccccc}
\toprule
Cost profile & PH & BP & A & SM & E & LB \\
\midrule
\textbf{Default}          & 1.0 & 1.0 & $\infty$ & $\infty$ & 2.0 & 0.1 \\
Uniform                   & 1.0 & 1.0 & $\infty$ & $\infty$ & 1.0 & $\infty$ \\
Upstream cheap            & 0.8 & 1.0 & $\infty$ & $\infty$ & 1.5 & 2.5 \\
Downstream cheap          & 2.5 & 1.5 & $\infty$ & $\infty$ & 1.0 & 0.8 \\
\bottomrule
\end{tabular}
\end{minipage}

\caption{Relative feature modification costs used in the semi-synthetic and Taiwan experiments. Here, PH = PAY\_HISTORY, BP = BILL\_PAY, SM = SEX\_MAR, and LB = LIMIT\_BAL. Entries with $\infty$ denote immutable features. All finite costs are normalized to have unit mean before constructing the inverse cost matrix.}
\label{tab:cost_profiles}
\end{table*}

The stable model is computed using both RRM and RGD, with RGD employing a step size of $\eta = 5 \times 10^{-4}$. 
Iterations terminate once $\|w^{(t+1)}-w^{(t)}\|_2 < 10^{-8}$ or the maximum number of iterations is reached. We use a maximum of $100$ iterations for RRM and $8000$ iterations for RGD.

We approximate the performatively optimal model using adaptive random search followed by local refinement. Candidate models are sampled uniformly from progressively smaller neighborhoods (radii $1.5$, $0.5$, and $0.15$) around the current best solution using $5{,}000$, $20{,}000$, and $50{,}000$ samples, respectively. The best candidate is then refined through $3{,}000$ iterations of Gaussian perturbation search (i.e. $w' = w + \xi,~\xi \sim \mathcal{N}(0, 0.05^2 I)$), accepting only objective-improving updates. We refer to this as \emph{Grid-search}.

\begin{figure*}[t]
    \centering

    \begin{subfigure}[t]{0.48\textwidth}
        \centering
        \includegraphics[width=\linewidth]{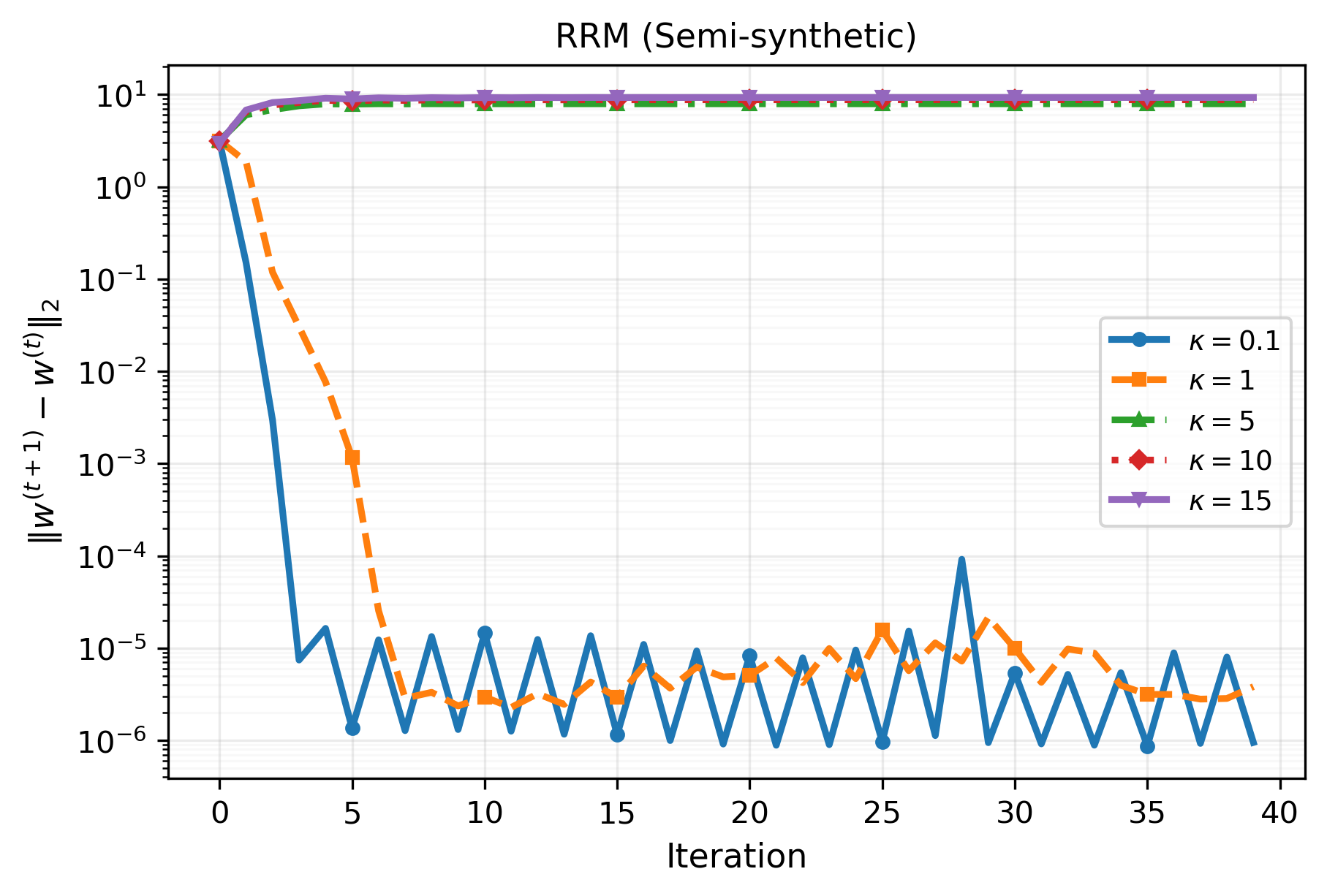}
        \caption{Semi-synthetic (RRM)}
        \label{fig:conv-rrm-synth}
    \end{subfigure}
    \hfill
    \begin{subfigure}[t]{0.48\textwidth}
        \centering
        \includegraphics[width=\linewidth]{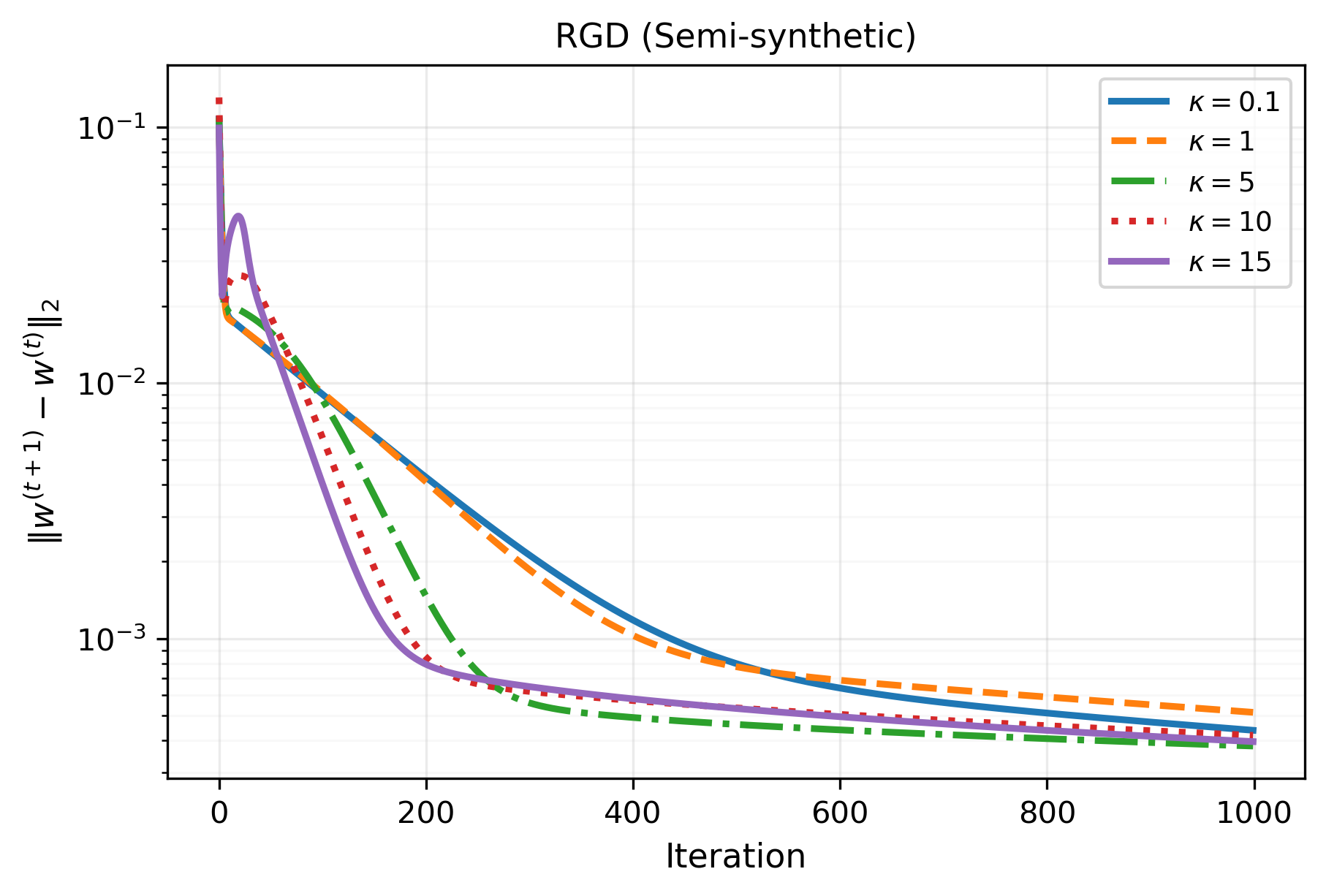}
        \caption{Semi-synthetic (RGD)}
        \label{fig:conv-rgd-synth}
    \end{subfigure}

    \vspace{0.8em}

    \begin{subfigure}[t]{0.48\textwidth}
        \centering
        \includegraphics[width=\linewidth]{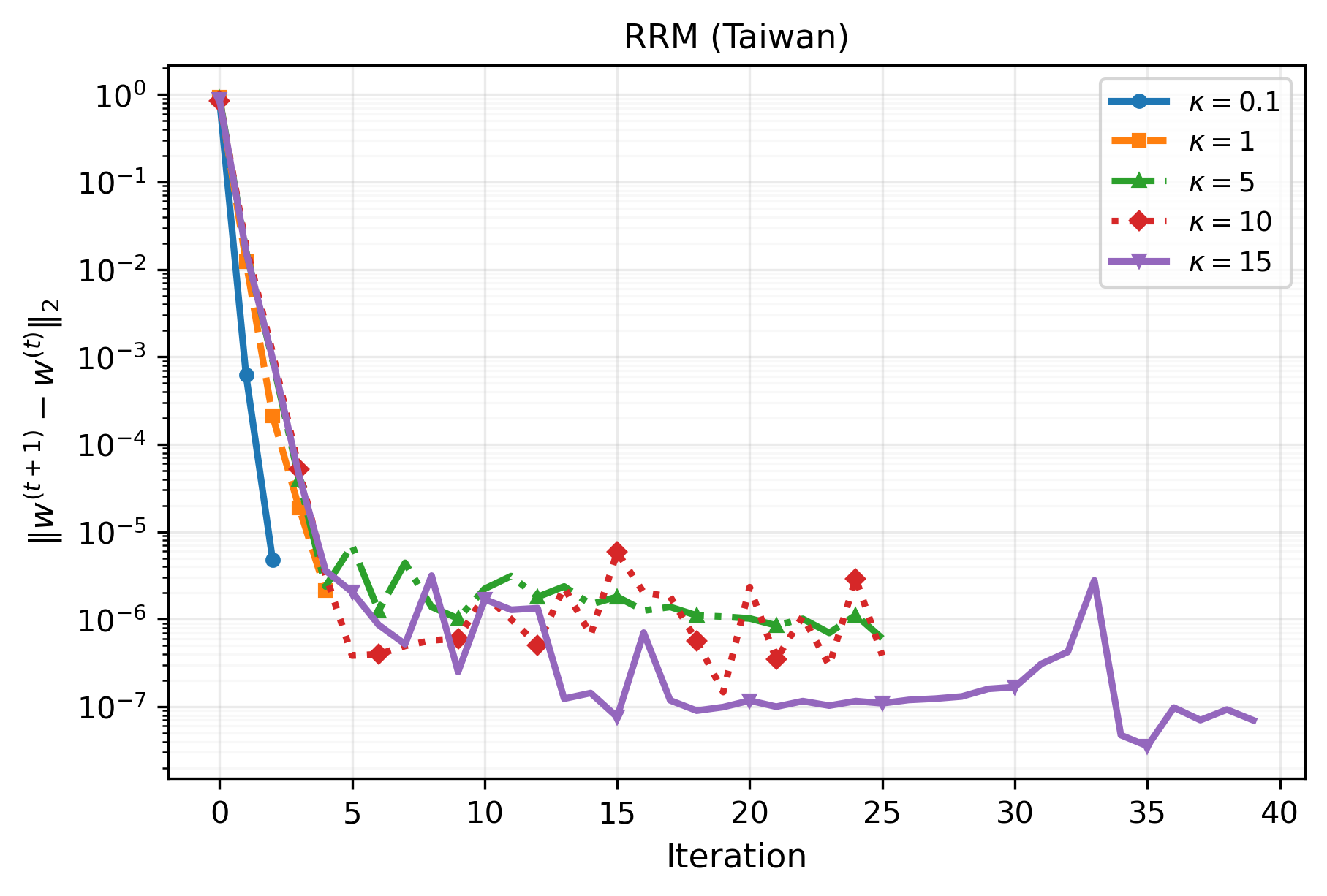}
        \caption{Taiwan (RRM)}
        \label{fig:conv-rrm-taiwan}
    \end{subfigure}
    \hfill
    \begin{subfigure}[t]{0.48\textwidth}
        \centering
        \includegraphics[width=\linewidth]{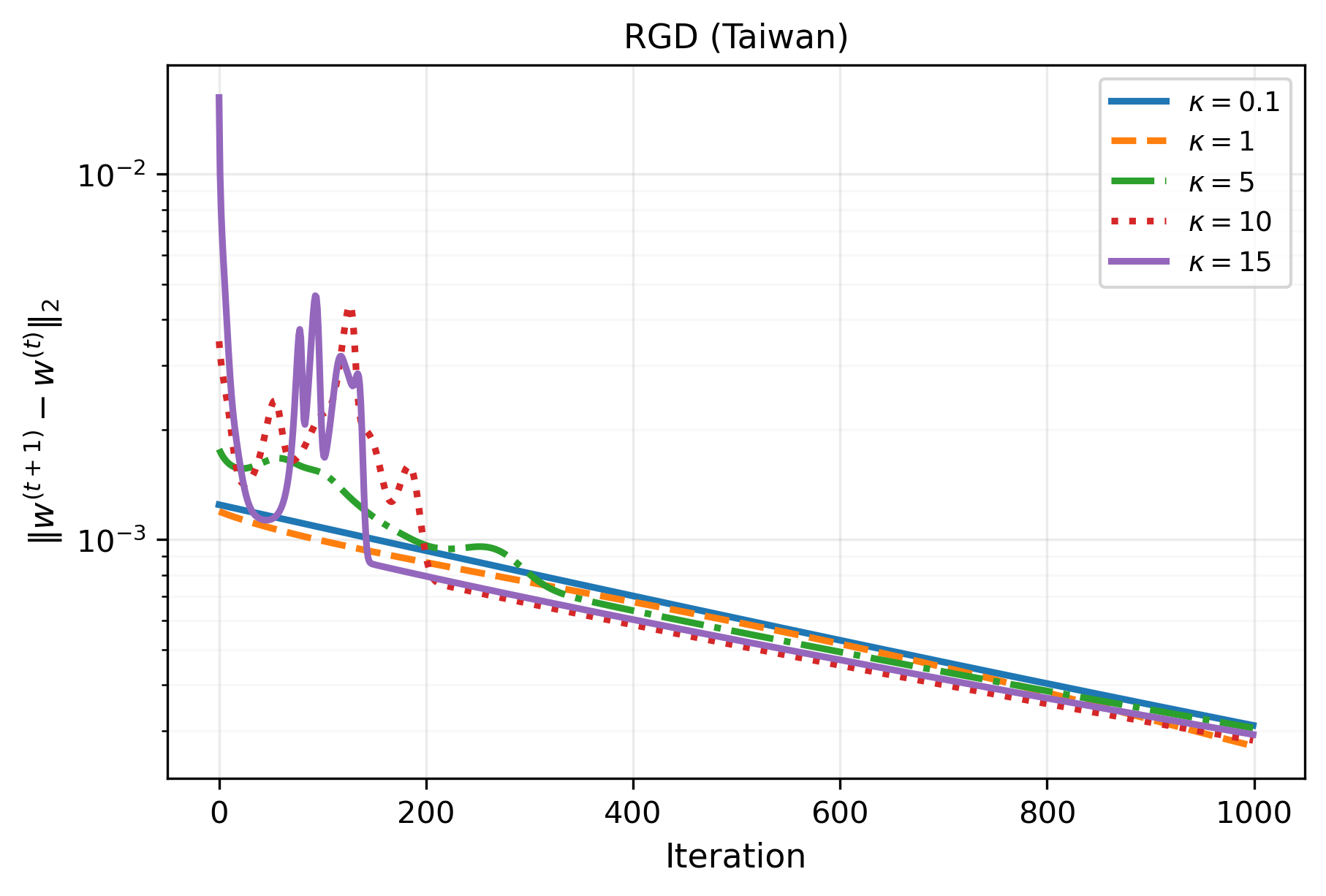}
        \caption{Taiwan (RGD)}
        \label{fig:conv-rgd-taiwan}
    \end{subfigure}

    \caption{
    Convergence of RRM and RGD for varying values of $\kappa$, corresponding to different colored lines. Each curve reports the mean step norm over randomized cost matrix realizations centered around the default cost profile. Top row: semi-synthetic data. Bottom row: Taiwan data.
    }
    \label{fig:convergence}
\end{figure*}

\subsection{Experimental Results}
\label{sec:results}

\begin{figure*}[t]
    \centering

    \begin{subfigure}[t]{0.48\textwidth}
        \centering
        \includegraphics[width=\linewidth]{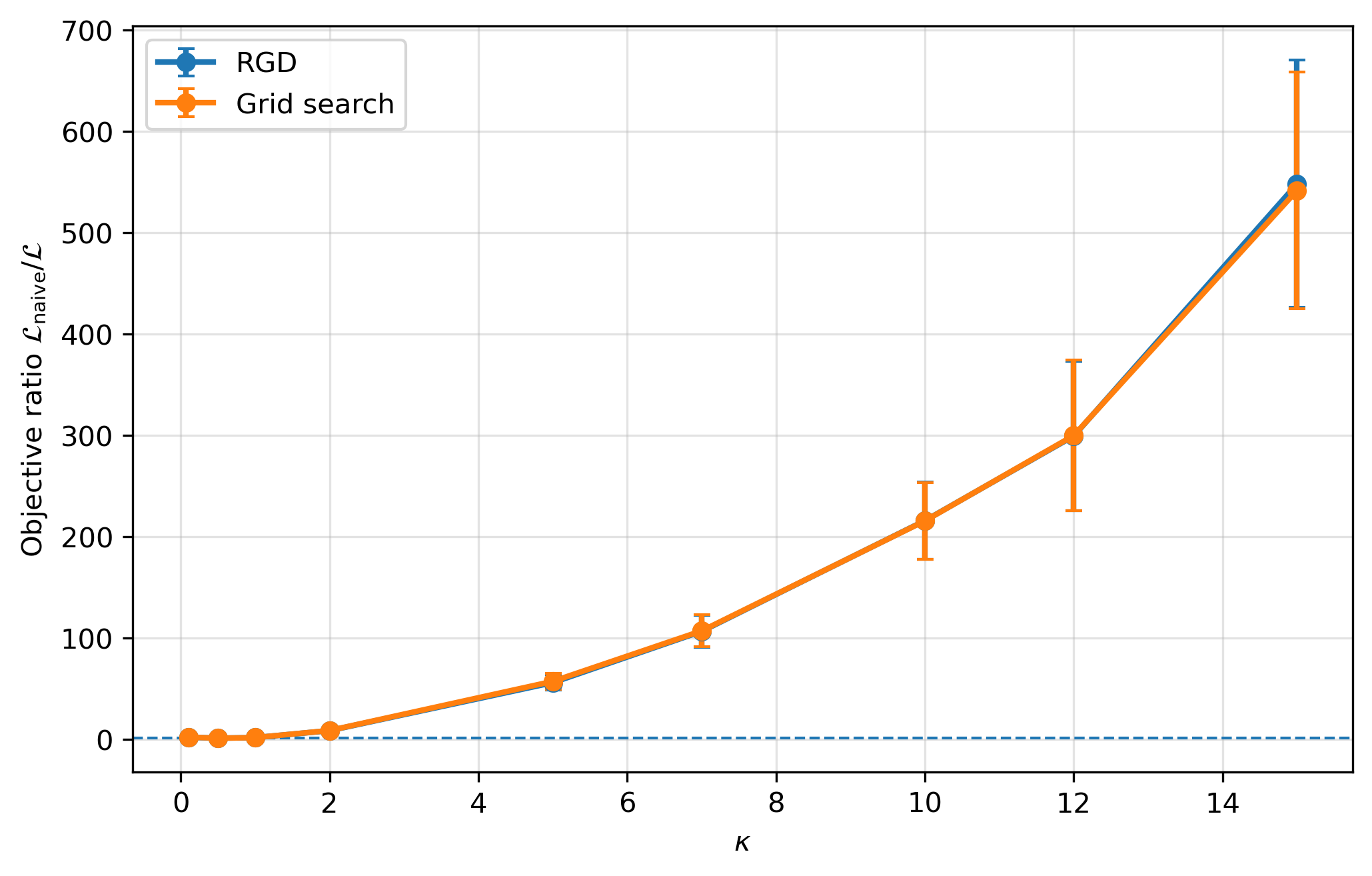}
        \caption{Semi-synthetic dataset.}
        \label{fig:obj-ratio-synth}
    \end{subfigure}
    \hfill
    \begin{subfigure}[t]{0.48\textwidth}
        \centering
        \includegraphics[width=\linewidth]{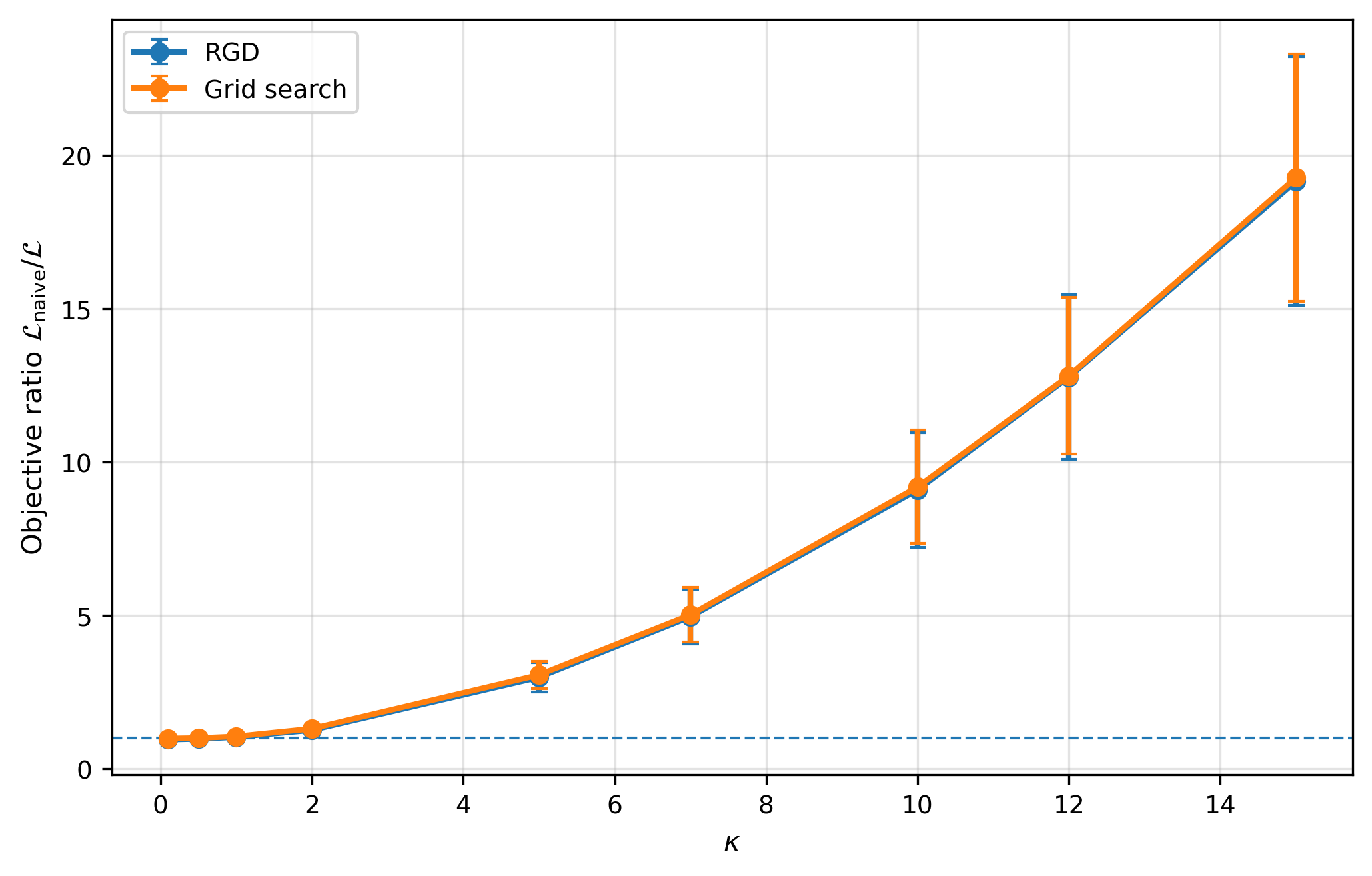}
        \caption{Taiwan dataset.}
        \label{fig:obj-ratio-taiwan}
    \end{subfigure}

    \caption{
    Ratio of the naive ERM performative objective to the corresponding performative objective as a function of the performativity parameter $\kappa$. Results are shown for the stable (RGD) and near-optimal models on the semi-synthetic and Taiwan datasets. Values greater than one indicate improvement over the naive ERM model. Error bars denote one standard deviation over randomized cost matrix realizations.
    }
    \label{fig:rel-improvement}
\end{figure*}

\paragraph{Convergence of RRM and RGD.}
Figure~\ref{fig:convergence} reports the successive iterate difference $\|w^{(t+1)}-w^{(t)}\|_2$ for RRM and RGD across different values of the performativity parameter $\kappa$. On the semi-synthetic dataset, RRM converges rapidly for smaller values of $\kappa$, while for $\kappa\in\{5,10,15\}$ the iterates fail to converge to a fixed point, consistent with the theoretical requirement that the performative sensitivity remain sufficiently small for the RRM contraction guarantee to hold. In contrast, RRM converges within a few iterations for all values of $\kappa$ on the Taiwan dataset.

RGD exhibits a more gradual but consistent decrease in the update norm on both datasets. Although the stopping tolerance is not reached within $1000$ iterations in every setting, the monotonic decay indicates stable optimization progress. The small oscillations visible near the numerical tolerance in the RRM plots are numerical artifacts arising from finite optimization precision.

Using empirically estimated values of the strong-convexity and joint-smoothness constants (see Appendix~\ref{app:bounds}), the sufficient convergence conditions in Corollary~\ref{pro:rrm-stable} and Theorem~\ref{thm:s-0-dist} certify convergence only for $\kappa\lesssim10^{-8}$ on both datasets. This is considerably more conservative than the empirical convergence behavior in Figure~\ref{fig:convergence}, since the analysis relies on a worst-case global estimate of the joint-smoothness constant.

\paragraph{Performative Improvement. } We next evaluate the benefit of accounting for performative effects by comparing the stable (RGD) and performative-optimal (Grid-search) models against the naive Expected Risk Minimization (ERM) baseline in terms of the performative objective, as shown in Figure~\ref{fig:rel-improvement}. As the performativity parameter $\kappa$ increases, the ratio between the performative objective of the naive ERM solution and that of the corresponding stable (or performative-optimal) model also increases, indicating progressively larger improvements over the naive ERM solution. As strategic adaptation becomes stronger, the distribution shift induced by recourse grows, causing the naive ERM model to become increasingly misaligned with the induced data distribution. Consequently, explicitly accounting for performative feedback becomes increasingly important, yielding substantial improvements over standard ERM.

The model distance results in Appendix~\ref{app:sensitiv} further support this conclusion, showing that the relative distance between the stable and Grid-search models remains small across all values of $\kappa$. Together with Figure~\ref{fig:rel-improvement}, this shows that the stable solution is consistently near-optimal in our test scenarios.

\begin{figure*}[t]
    \centering

    \begin{subfigure}[t]{0.48\textwidth}
        \centering
        \includegraphics[width=\linewidth]{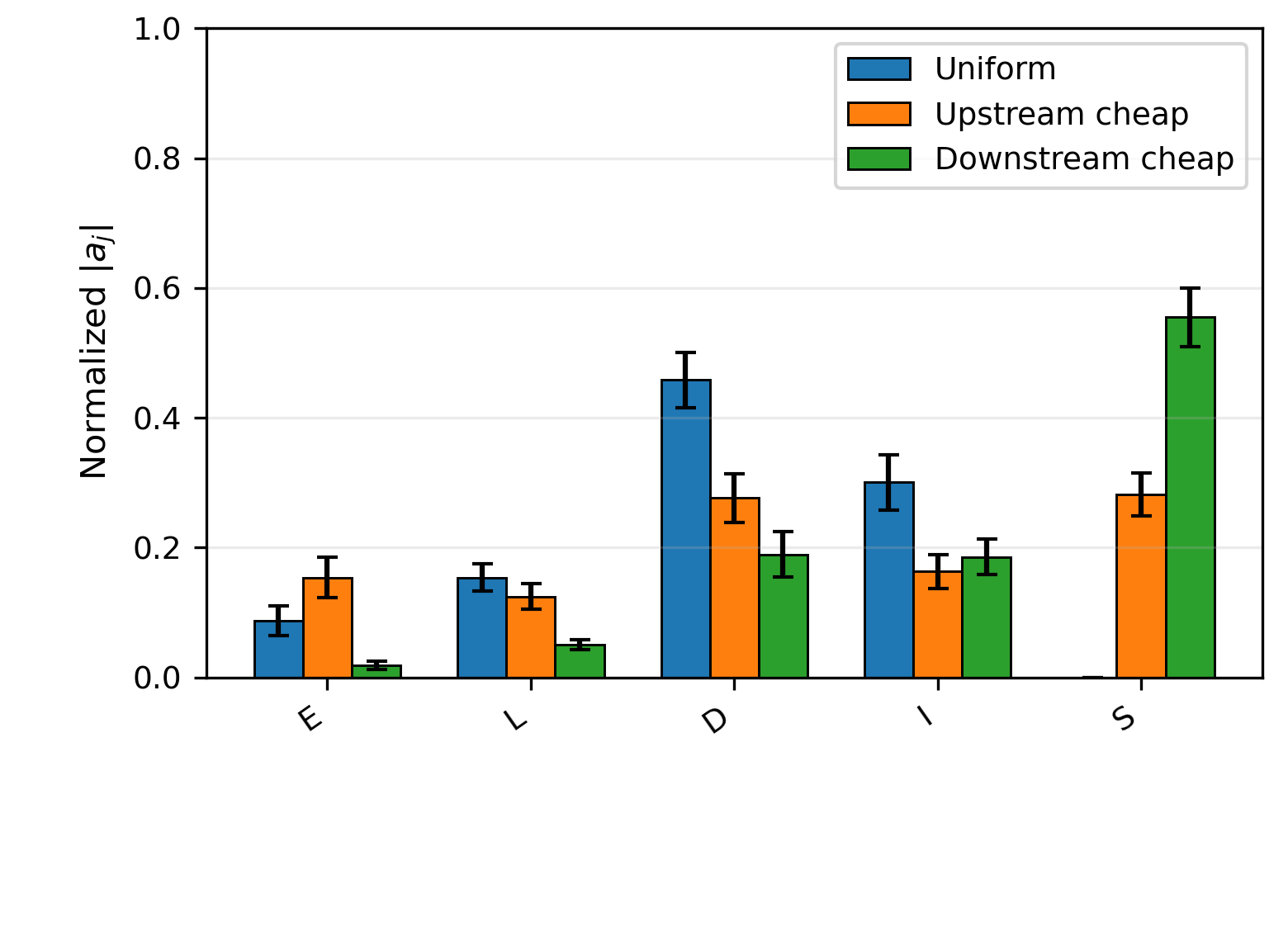}
        \caption{Semi-synthetic: normalized action magnitudes.}
        \label{fig:synth-cost-action}
    \end{subfigure}
    \hfill
    \begin{subfigure}[t]{0.48\textwidth}
        \centering
        \includegraphics[width=\linewidth]{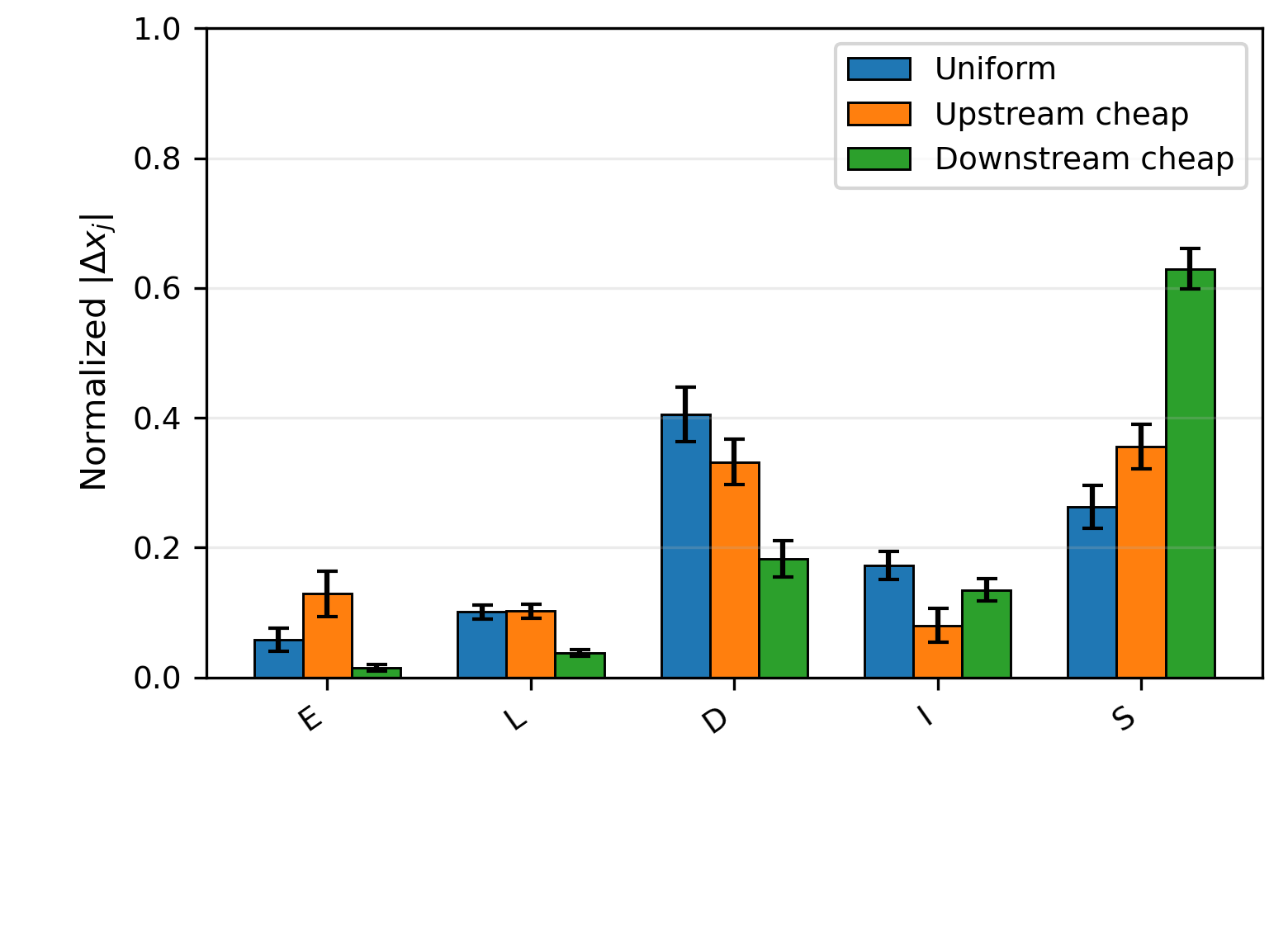}
        \caption{Semi-synthetic: induced feature changes.}
        \label{fig:synth-cost-feature}
    \end{subfigure}

    \vspace{0.8em}

    \begin{subfigure}[t]{0.48\textwidth}
        \centering
        \includegraphics[width=\linewidth]{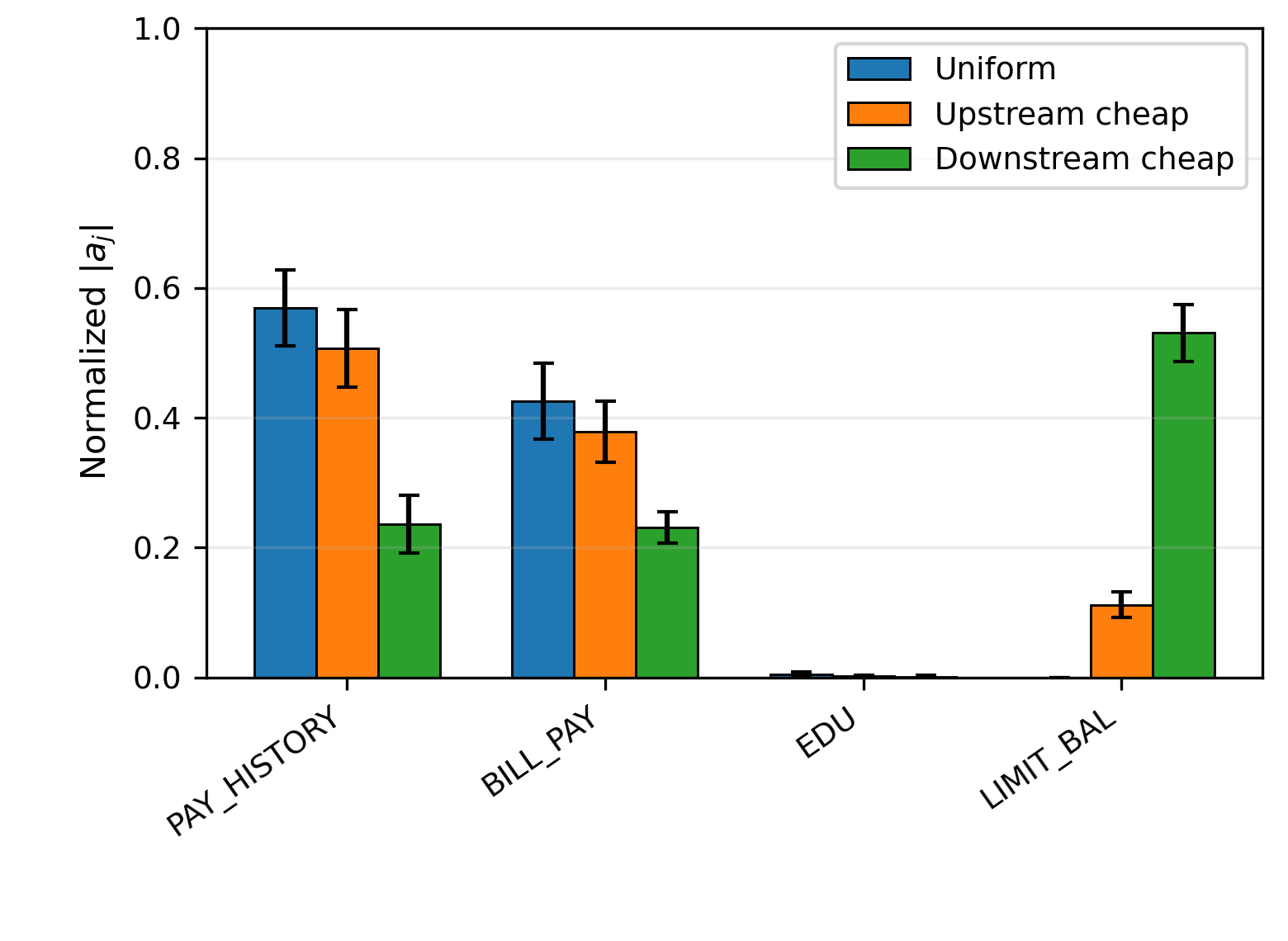}
        \caption{Taiwan: normalized action magnitudes.}
        \label{fig:taiwan-cost-action}
    \end{subfigure}
    \hfill
    \begin{subfigure}[t]{0.48\textwidth}
        \centering
        \includegraphics[width=\linewidth]{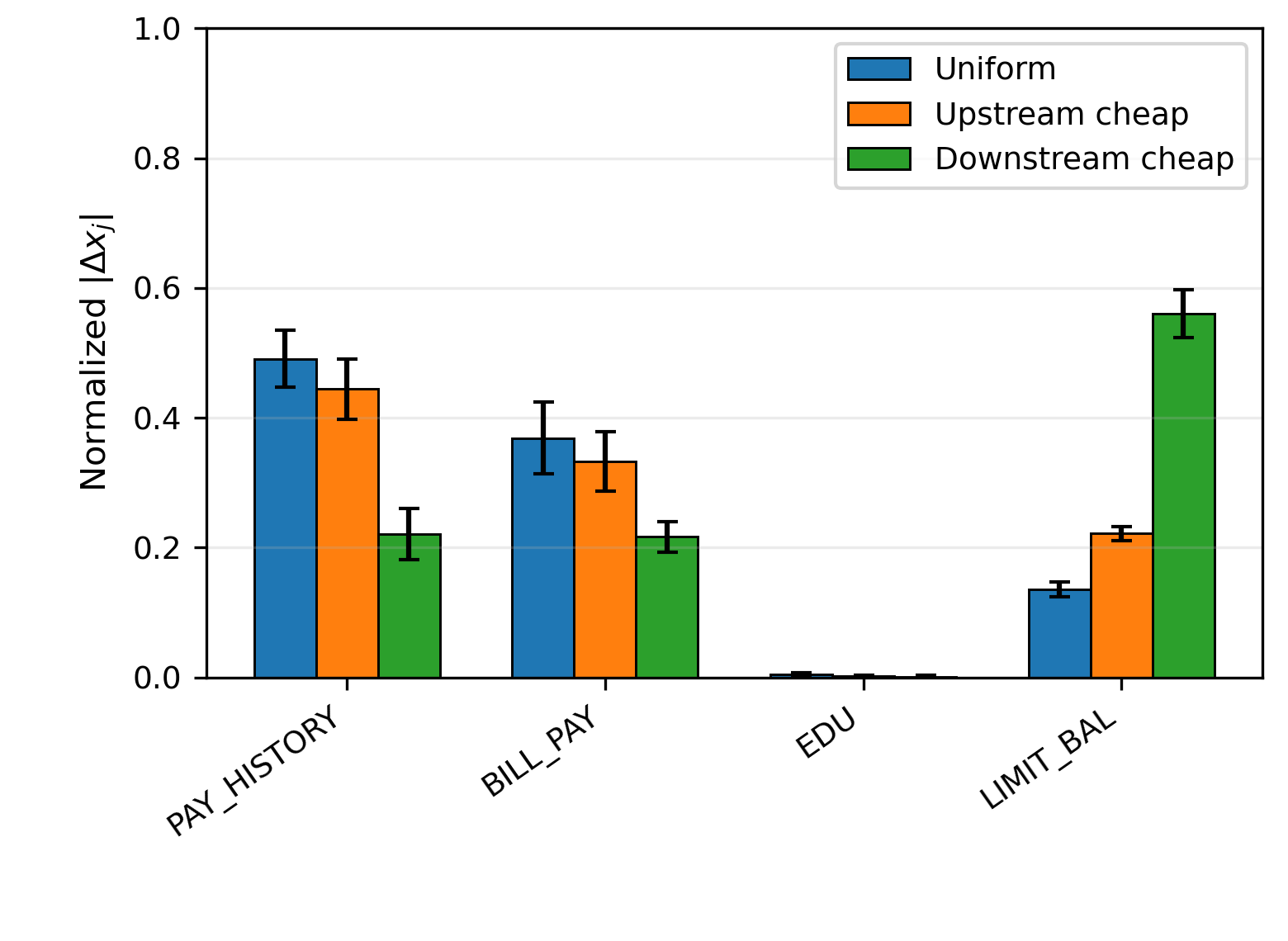}
        \caption{Taiwan: induced feature changes.}
        \label{fig:taiwan-cost-feature}
    \end{subfigure}

    \caption{
    Normalized RGD action magnitudes and induced feature changes on the semi-synthetic and Taiwan datasets for $\kappa=7$ under the \emph{Uniform} (blue), \emph{Upstream-cheap} (orange), and \emph{Downstream-cheap} (green) intervention cost profiles. Bars denote the mean normalized magnitude across randomized cost realizations, with error bars indicating one standard deviation.
    }
    \label{fig:cost-profile-allocation}
\end{figure*}

\paragraph{Causal Propagation.} 
We next investigate how the intervention cost geometry influences the distribution of recourse actions and the resulting feature changes while keeping the causal graph fixed. Specifically, we consider the \emph{Uniform}, \emph{Upstream-cheap}, and \emph{Downstream-cheap} cost profiles from Table~\ref{tab:cost_profiles}. The \emph{Uniform} profile assigns equal costs to all actionable features, the \emph{Upstream-cheap} profile favors interventions on upstream variables, and the \emph{Downstream-cheap} profile favors downstream variables. The corresponding action and feature-change distributions are shown in Figure~\ref{fig:cost-profile-allocation}.

As expected, the intervention cost geometry directly governs the allocation of actions. Under the \emph{Upstream-cheap} profile, the stable model concentrates interventions on upstream variables, whereas the \emph{Downstream-cheap} profile shifts interventions toward downstream variables. The \emph{Uniform} profile yields a more balanced allocation. Without causal propagation (i.e., $G_x$ is the identity matrix), the action and feature-change distributions would coincide. 
Yet, the induced feature changes are governed by causal propagation rather than intervention costs alone. Under the \emph{Uniform} profile, the terminal downstream feature (Savings $S$ in the semi-synthetic dataset and \texttt{LIMIT\_BAL} in the Taiwan dataset) is assigned infinite intervention cost and therefore receives no direct intervention. Nevertheless, both features undergo substantial changes due to upstream interventions propagating through the structural causal model. Consequently, the feature-change distribution differs markedly from the action distribution, indicating the distinct roles of the cost matrix in determining where interventions are applied and the causal graph in determining how those interventions affect the feature space.

\section{Discussion and Future Work}
\label{sec:discussion}

In this work, we introduced a causal performative framework for algorithmic recourse that explicitly models how recommended actions propagate through a structural causal model and influence both observed features and the underlying outcome. We showed that these causal responses naturally induce a performative prediction problem, established conditions under which stable and near-optimal solutions can be efficiently computed, and demonstrated the convergence and practical effectiveness of the proposed approach on both semi-synthetic and real-world datasets.

Our framework assumes access to a structural causal model describing how interventions propagate through the feature space. An important direction for future work is to jointly learn or account for uncertainty in the causal graph while computing stable recourse policies, rather than having access to the graph. Another important direction is extending the framework to richer classes of predictive models and recourse objectives.

\section*{Acknowledgments} 
Avasarala and Ziani thank the National Science Foundation for their support through the CAREER-2336236 and IIS-2504990 awards. Part of this work was supported by the Simons Institute for the Theory of Computing and conducted when Ziani was visiting the Institute.

\bibliographystyle{plainnat}
\bibliography{bib}

\appendix
\appendix
\section{Alternative Improvement-Constrained Action Model}
\label{app:icr}

In our main formulation, an agent selects an action by maximizing the
utility in Equation~\ref{eq:inner}, which trades off predicted score
improvement against intervention cost. We now consider an alternative
action model in which the agent minimizes intervention cost subject to
attaining a prescribed improvement in predicted score. This formulation is closely related to several minimum-cost recourse
formulations in the literature, including improvement-focused causal
recourse \citep{KonigFG23}, but is incorporated here within our
performative learning framework.
Unlike a fixed-model recourse baseline, we incorporate this
alternative action model into our performative framework. Thus, for any
deployed model $\wv$, the agent first computes a minimum-cost
improvement-constrained action, and the learner then accounts for the
distribution shift induced by this action. The resulting stable
model is computed using RRM or RGD, while a performative optimum is
approximated using grid search followed by local refinement.

Let the deployed scoring rule be $h_{\wv}(x)=\wv^\top x$. Under our
causal response model, an action $a\in\mathbb{R}^d$ induces
$x'=x+G_x^\top a$ and $y'=y+G_y^\top a$. Consequently, the increase in
predicted score is
$
h_{\wv}(x')-h_{\wv}(x)
=
\wv^\top G_x^\top a.
$

For a desired score improvement $\bar\gamma>0$, the corresponding action
associated with model $\wv$ is defined as
\begin{equation}
\label{eq:icr-action-problem}
a_I(\wv)
\in
\arg\min_{a\in\mathbb{R}^d}
\left\{
a^\top C a:
\wv^\top G_x^\top a\ge\bar\gamma
\right\}.
\end{equation}

\begin{proposition}[Closed-form improvement-constrained action]
\label{prop:icr-closed-form}
Suppose $C\succ0$, $\bar\gamma>0$, and $G_x\wv\neq0$. Then the
optimization problem in Equation~\ref{eq:icr-action-problem} has a
unique solution given by
\[
a_I(\wv)
=
\frac{\bar\gamma}
{\wv^\top G_x^\top C^{-1}G_x\wv}
C^{-1}G_x\wv.
\]
Moreover, the improvement constraint is active at the optimum:
$
\wv^\top G_x^\top a_I(\wv)=\bar\gamma.
$
\end{proposition}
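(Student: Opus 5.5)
This is a strictly convex quadratic minimized over a half-space. I would argue through existence/uniqueness, then activity of the constraint, then the KKT conditions. Throughout, set $\mathbf{v} := G_x\wv \neq 0$, so that the constraint reads $\mathbf{v}^\top a \ge \bar\gamma$.

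\emph{Existence and uniqueness.} The feasible set $\{a : \mathbf{v}^\top a \ge \bar\gamma\}$ is a nonempty closed half-space. It is nonempty because $\mathbf{v}\neq 0$; for instance $a = \bar\gamma\,\mathbf{v}/\|\mathbf{v}\|_2^2$ is feasible. Since $C\succ 0$, the objective $a^\top C a$ is strictly convex and coercive. Hence a minimizer exists and is unique.

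\emph{Activity of the constraint.} Suppose instead that a minimizer $a^\star$ satisfies $\mathbf{v}^\top a^\star > \bar\gamma$. Then $a^\star\neq 0$, because $\bar\gamma>0$ and $a=0$ is infeasible. For $t\in(0,1)$ close to $1$, the point $t a^\star$ is still feasible, and its objective $t^2 (a^\star)^\top C a^\star$ is strictly smaller than $(a^\star)^\top C a^\star$. This contradicts optimality, so $\mathbf{v}^\top a^\star = \bar\gamma$.

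\emph{Closed form via KKT.} Because the constraint is affine, Slater's condition holds (or linear constraint qualification applies), so the KKT conditions are necessary and sufficient. With multiplier $\lambda\ge 0$, stationarity gives
\[
2Ca - \lambda \mathbf{v} = 0,
\qquad\text{so}\qquad
a = \tfrac{\lambda}{2} C^{-1}\mathbf{v}.
\]
Substituting into the active constraint gives $\tfrac{\lambda}{2}\,\mathbf{v}^\top C^{-1}\mathbf{v} = \bar\gamma$. Here $\mathbf{v}^\top C^{-1}\mathbf{v} > 0$, since $C^{-1}\succ 0$ and $\mathbf{v}\neq 0$. Therefore
\[
\lambda = \frac{2\bar\gamma}{\mathbf{v}^\top C^{-1}\mathbf{v}} > 0,
\]
which is consistent with complementary slackness. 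This yields
\[
a_I(\wv) = \frac{\bar\gamma}{\wv^\top G_x^\top C^{-1} G_x \wv}\, C^{-1}G_x\wv .
\]

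\emph{Alternative route.} One can avoid KKT and argue with Cauchy--Schwarz in the $C$-inner product. For any feasible $a$,
\[
\bar\gamma \le \mathbf{v}^\top a = (C^{-1/2}\mathbf{v})^\top (C^{1/2}a) \le \sqrt{\mathbf{v}^\top C^{-1}\mathbf{v}}\,\sqrt{a^\top C a}.
\]
Hence $a^\top C a \ge \bar\gamma^2/(\mathbf{v}^\top C^{-1}\mathbf{v})$. Equality holds exactly when $C^{1/2}a$ is a positive multiple of $C^{-1/2}\mathbf{v}$ and the constraint is tight. This gives the same formula and uniqueness in a single step.

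There is no real obstacle here. The only care needed is to use $G_x\wv\neq 0$ for feasibility and for positivity of the denominator.
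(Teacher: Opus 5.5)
Your proof is correct and follows essentially the same route as the paper: KKT stationarity gives $a=\tfrac{\lambda}{2}C^{-1}G_x\wv$, the active constraint fixes $\lambda = 2\bar\gamma/(\wv^\top G_x^\top C^{-1}G_x\wv)$, and strict convexity gives uniqueness. You go a bit further than the paper by proving existence (coercivity over a nonempty half-space) and by justifying activity of the constraint with the scaling argument, which the paper only asserts from $\bar\gamma>0$; your Cauchy--Schwarz alternative is also valid.
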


\begin{proof}
Let $q=G_x\wv$. Since $C\succ0$, the objective is strictly convex. The
KKT stationarity condition gives
$
2Ca-\mu q=0,
$
and hence $a=(\mu/2)C^{-1}q$. Because $\bar\gamma>0$, the constraint is
active at the optimum. Imposing $q^\top a=\bar\gamma$ yields
$
\frac{\mu}{2}
=
\frac{\bar\gamma}{q^\top C^{-1}q},
$
which gives the stated expression. Uniqueness follows from strict
convexity.
\end{proof}

The corresponding feature and label shifts are
$
\Delta x_I(\wv)=G_x^\top a_I(\wv),~
\Delta y_I(\wv)=G_y^\top a_I(\wv).
$
Hence, deployment of $\wv$ maps each observation $(x_i,y_i)$ to
$\left(
x_i+\Delta x_I(\wv),
\;
y_i+\Delta y_I(\wv)
\right)$.
We then define the corresponding performative objective analogously to
Equation~\ref{eq:outer-sample}:
\begin{align}
\wv^*
\in
\arg\min_{\wv\in\Wc}
\frac{1}{m}
\sum_{i=1}^m
\ell\Big(
\wv^\top(&\xv_i+\Delta\xv_I(\av_I)), y_i+\Delta y_I(\av_I)
\Big).
\end{align}

Similarly, the corresponding performatively stable model satisfies
\begin{align}
\wv^s
\in
\arg\min_{\wv\in\Wc}
\frac{1}{m}
\sum_{i=1}^m
\ell\Big(
\wv^\top(&\xv_i+\Delta\xv_I(\wv^s)),   y_i+\Delta y_I(\wv^s)
\Big).
\end{align}

We compute this stable solution using the same RRM and RGD procedures as
in Algorithms~\ref{alg:rrm} and~\ref{alg:rgd}, replacing the
utility-maximizing action from Equation~\ref{eq:inner} with
$a_I(\wv)$ from Proposition~\ref{prop:icr-closed-form}.

The corresponding control parameters admit complementary
interpretations. In our main framework, the parameter $\kappa$ scales
the inverse cost matrix and controls the overall actionability of the
environment, with larger values of $\kappa$ leading to stronger recourse
actions and consequently greater performative effects. In contrast,
$\bar\gamma$ specifies the desired improvement that the prescribed
action must achieve. Consequently, $\kappa$ governs \emph{how much
agents can change}, whereas $\bar\gamma$ governs \emph{how much
improvement is required}. These parameters therefore capture
complementary aspects of the recourse problem: actionability versus
desired improvement.

\section{Theory}
\label{app:proof}

\subsection{Structural Equations and Causal Propagation }
\label{app:scm}

We model the causal relationships among the input features using a Structural Causal Model (SCM). Let $\xv=(x_1,\ldots,x_d)^\top$ denote the vector of input features. Each feature $x_i$ is generated according to a structural equation of the form
$
x_i=f_i(\mathrm{Pa}(x_i),u_i),
$
where $\mathrm{Pa}(x_i)$ denotes the set of direct causal parents of $x_i$ in the causal graph and $u_i$ is an exogenous noise variable. Throughout the paper, we consider a linear SCM, which can be written compactly as $\xv=\Av^\top \xv+u$, where $\Av$ is the weighted adjacency matrix of the causal graph. Entry $\Av_{ij}$ represents the \emph{direct} causal influence of feature $i$ on feature $j$.

\begin{figure}[ht!]
\begin{center}
\begin{tikzpicture}[scale=1.0]

    \node (y) at (0,1) {$\Delta y$};
    
    \draw[rounded corners] (1.5,2.5) rectangle (2.5,-0.5);
    \node (a1) at (2,2) {$a_1$};
    \node (a2) at (2,1) {$a_2$};
    \node (dotsA) at (2,0.5) {$\vdots$};
    \node (am) at (2,0) {$a_d$};
    
    \draw[rounded corners] (4,2.5) rectangle (5,-1);
    \node (x1) at (4.5,2) {$\Delta x_1$};
    \node (x2) at (4.5,1) {$\Delta x_2$};
    \node (dotsX) at (4.5,0.3) {$\vdots$};
    \node (xn) at (4.5,-0.5) {$\Delta x_d$};
    
    \draw[<-] (y) -- (a1);
    \draw[<-] (y) -- (a2);
    \draw[<-] (y) -- (am);
    \draw[->] (a1) -- (x1);
    \draw[->] (a1) -- (x2);
    \draw[->] (a1) -- (xn);
    \draw[->] (a2) -- (x1);
    \draw[->] (a2) -- (x2);
    \draw[->] (a2) -- (xn);
    \draw[->] (am) -- (x1);
    \draw[->] (am) -- (x2);
    \draw[->] (am) -- (xn);
    
    \node[above] at (0.8,1.5) {\textit{$G_y$}};
    \node[above] at (3.3,2.3) {\textit{$G_x$}};
\end{tikzpicture}
\end{center}
\caption{The actions $\av$ and how they affect features and label changes using a linear relationship, i.e., $\Delta\xv(\av)=G_x^T\av$ and $\Delta y(\av)= G_y^T\av$. }
\label{fig:diagram of system}
\end{figure}

We model recourse actions as \emph{additive interventions} on the structural equations. Given an intervention vector $\av=(a_1,\ldots,a_d)^\top$, the post-intervention features satisfy
$
\xv'=\Av^\top \xv'+u+\av.
$
Thus, the intervention $a_i$ acts as an additive shift to the structural equation generating feature $i$.

Unlike independent feature perturbations, an intervention on one feature propagates through the causal graph and may induce changes in downstream features. Besides direct effects, interventions also produce indirect effects through multi-hop causal paths. For example, if feature $i$ influences an intermediate feature $k$, which in turn influences feature $j$, then the indirect contribution from $i$ to $j$ is given by the product $\Av_{ik}\Av_{kj}$. More generally, higher powers of the adjacency matrix capture the effects of increasingly long causal paths.

Defining the feature change as $\Delta \xv=\xv'-\xv$, subtracting the pre- and post-intervention structural equations gives
\begin{align}
\Delta \xv
&=\Av^\top\Delta \xv+\av
=(I-\Av^\top)^{-1}\av.
\end{align}
We therefore define the feature contribution matrix as
$
G_x^\top:=(I-\Av^\top)^{-1},
$
so that
$
\Delta \xv=G_x^\top \av.
$
Thus, the matrix $G_x^\top$ captures both the direct and indirect effects of interventions, mapping an intervention vector to the resulting changes in the features. The following result provides a closed-form characterization of $G_x$ for directed acyclic graphs.

\begin{remark}
\label{rem:Gx}
For a directed acyclic graph, $G_x=\sum_{k=0}^{d-1}\Av^k$.

\end{remark}

\begin{proof}[Proof of Remark~\ref{rem:Gx}]
We first show that, for every integer $k\ge0$, the $(i,j)$-th entry of $\Av^k$
equals the total contribution of all directed paths of length exactly $k$
from feature $i$ to feature $j$. The proof is by induction on $k$.

For the base case, $k=0$, we have $\Av^0=I$. Hence,
\[
(\Av^0)_{ij}=
\begin{cases}
1,& i=j,\\
0,& i\neq j,
\end{cases}
\]
which corresponds to the unique path of length zero from a node to itself.

Suppose the claim holds for some $k\ge0$. By matrix multiplication,
\[
(\Av^{k+1})_{ij}
=
\sum_{\ell=1}^{d}
(\Av^k)_{i\ell}\Av_{\ell j}.
\]
By the induction hypothesis, $(\Av^k)_{i\ell}$ aggregates the contribution of
all directed paths of length $k$ from feature $i$ to feature $\ell$, while
$\Av_{\ell j}$ represents the direct influence from feature $\ell$ to feature $j$.
Therefore, each product $(\Av^k)_{i\ell}\Av_{\ell j}$ corresponds to extending every
length-$k$ path ending at $\ell$ by one additional edge to $j$. Summing over
all intermediate features $\ell$ accounts for every directed path of length
exactly $k+1$ from $i$ to $j$, establishing the induction.

Since the underlying causal graph is acyclic, every directed path contains at
most $d-1$ edges. Consequently, $\Av^k=0$ for all
$k\ge d$, and summing over all feasible path lengths gives
\[
G_x
=
\sum_{k=0}^{d-1}
\Av^k.
\]

Finally, computing $G_x$ requires evaluating the powers
$\Av,\Av^2,\ldots,\Av^{d-1}$.
Each matrix multiplication requires $O(d^3)$ operations, and
there are $O(d)$ such products. Hence, the contribution matrix can
be computed in polynomial time.
\end{proof}

\subsection{Sensitivity Characterization }
\label{app:sens}

\begin{definition}
\label{def:distribution-sensitivity}
A distribution map $D_{(\cdot)}$ is $\epsilon$-sensitive if
\[
\emd(D_{\wv},D_{\wv'})
\leq
\epsilon\|\wv-\wv'\|_2,
\qquad
\text{for all } \wv,\wv'\in\Wc,
\]
where $\emd$ denotes the earth mover's distance, equivalently the
Wasserstein-$1$ distance with $\ell_2$ transportation cost.
\end{definition}

The following lemma provides the main connection between our causal
recourse model and the performative prediction framework. In
particular, it explicitly characterizes the sensitivity parameter in
terms of the feature contribution matrices and the intervention cost
matrix.

\begin{lemma}
\label{lem:distribution-sensitive}
Under Assumption~\ref{assumpt:cost}, the distribution map
$D_{(\cdot)}$ is $\epsilon$-sensitive with
$
\epsilon
=
\frac{\kappa}{2}\Lambda(C,G_x,G_y),
$
where
$\Lambda(C,G_x,G_y)
:=
\|G_x\|_2^2\|C^{-1}\|_2
+
\|G_x\|_2\|C^{-1}\|_2\|G_y\|_2.$

\end{lemma}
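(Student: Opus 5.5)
We would bound the earth mover's distance by a coupling argument: the distribution $D_{\wv}$ is the pushforward of the base distribution $\Dc$ under a deterministic map. By Lemma~\ref{pro:recourse-closed-form}, deploying $\wv$ sends each $(\xv,y)$ to
\[
T_{\wv}(\xv,y)=\Bigl(\xv+\tfrac{\kappa}{2}G_x^\top C^{-1}G_x\wv,\; y+\tfrac{\kappa}{2}G_y^\top C^{-1}G_x\wv\Bigr).
\]
So $D_{\wv}=(T_{\wv})_\#\Dc$ and $D_{\wv'}=(T_{\wv'})_\#\Dc$. We use the coupling that draws a single $(\xv,y)\sim\Dc$ and pairs $T_{\wv}(\xv,y)$ with $T_{\wv'}(\xv,y)$. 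Since $\emd$ is an infimum over couplings, this gives
\[
\emd(D_{\wv},D_{\wv'})\le \EE_{(\xv,y)\sim\Dc}\bigl\|T_{\wv}(\xv,y)-T_{\wv'}(\xv,y)\bigr\|_2 .
\]

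The key simplification is that the shift does not depend on $(\xv,y)$. The displacement is the constant vector
\[
\tfrac{\kappa}{2}\bigl(G_x^\top C^{-1}G_x(\wv-\wv'),\; G_y^\top C^{-1}G_x(\wv-\wv')\bigr),
\]
so the expectation disappears and no moment condition is needed.

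To bound the norm of this stacked vector, we use the triangle inequality $\|(u,v)\|_2\le\|u\|_2+\|v\|_2$ rather than the tighter square-root form, because this is what produces the additive $\Lambda$. For the feature block, submultiplicativity of the operator norm together with $\|G_x^\top\|_2=\|G_x\|_2$ gives
\[
\|G_x^\top C^{-1}G_x(\wv-\wv')\|_2\le \|G_x\|_2^2\|C^{-1}\|_2\|\wv-\wv'\|_2 .
\]
For the label block,
\[
\|G_y^\top C^{-1}G_x(\wv-\wv')\|_2\le \|G_y\|_2\|C^{-1}\|_2\|G_x\|_2\|\wv-\wv'\|_2 .
\]
Multiplying the sum by $\kappa/2$ yields $\epsilon=\frac{\kappa}{2}\Lambda(C,G_x,G_y)$.

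The main subtlety is dimensional bookkeeping rather than analysis. $\Delta y=G_y^\top\av$ is a scalar, so $G_y$ must be a vector (a $d\times1$ matrix), and $\|G_y\|_2$ is then its Euclidean norm, which equals its operator norm. We would also state explicitly that the $\emd$ is taken on the joint space $\mathcal X\times\mathcal Y$ with the Euclidean ground metric on $\mathbb{R}^{d+1}$, so that the stacked-vector bound is the relevant one. With that metric, the argument holds for all $\wv,\wv'\in\Wc$, with no further restriction.
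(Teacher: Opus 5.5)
Your proposal is correct and is essentially the paper's proof: the same coupling that applies both recourse shifts to a single draw from $\Dc$, the same constant displacement, the triangle inequality to split the feature and label blocks, and operator-norm submultiplicativity on each block. The one cosmetic difference is that the paper's write-up uses the scaling $1/(2\lambda)$ where you use $\kappa/2$; these coincide under the identification $\kappa = 1/\lambda$ implicit in the algorithms.
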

\begin{proof}
 Define $A=G_x^T\inv{C}G_x/(2\lambda)$ and $B=G_y^T\inv{C}G_x/(2\lambda)$. Any $(\xv,y)\sim D$ will get mapped to $(\xv+A\wv,y+B\wv)$ and  $(\xv+A\wv',y+B\wv')$ when using $\wv$ and $\wv'$, respectively. Hence, the earth mover's distance is bounded by
 \begin{align*}
    \emd\left(D_{\wv}, D_{\wv'}\right) &\leq \|\left(A\left(\wv-\wv\right)',\left(\wv-\wv'\right)B\right)\|_2 \\ 
    &\leq \|(A\left(\wv-\wv\right)'\|_2+\|\left(\wv-\wv'\right)B)\|_2 \\ 
    &\leq \|A\|_2\|\wv-\wv'\|_2+\|\wv-\wv'\|_2\|B\|_2 \\ 
    &= \left(\|A\|_2+\|B\|_2\right) \|\wv-\wv'\|_2,
\end{align*}
where the first inequality is by the definition of the earth mover's distance, the second is by the triangle inequality, and the last is by the operator norm inequality. The proof is completed by noting that $\|A\|_2\leq\|G_x\|_2^2\|\inv{C}\|_2/(2\lambda)$ and $\|B\|_2\leq\|G_x\|_2\|\inv{C}\|_2\|G_y\|_2/(2\lambda)$
\end{proof}

Lemma~\ref{lem:distribution-sensitive} shows that 
stronger causal amplification,
corresponding to larger $\|G_x\|_2$ or $\|G_y\|_2$, and lower
intervention costs, corresponding to larger $\kappa$ or
$\|C^{-1}\|_2$, increase the sensitivity of the induced distribution.

\subsection{Missing Proofs}
\label{app:proofs}
\begin{proof}[Proof of Lemma~\ref{pro:recourse-closed-form}]
The agent solves
\[
\max_{\av} \; \wv^\top(\xv+G_x^\top \av)-\frac{1}{\kappa} \av^\top C \av .
\]
Dropping the term independent of $\av$, this is equivalent to maximizing
\[
\wv^\top G_x^\top \av-\frac{1}{\kappa} \av^\top C \av .
\]
Since $C$ is positive definite, the objective is strictly concave in $\av$. The first-order condition gives
\[
G_x \wv - 2\frac{1}{\kappa} C \av = 0,
\]
and hence
\[
\av^*=\frac{\kappa}{2}C^{-1}G_x \wv .
\]
\end{proof}
\begin{proof}[Proof of Lemma~\ref{pro:non-convex}]
Consider the special case where $\ell$ is the squared loss and consider a sample $S=\{(x,y)\}$ which contains a single labeled example in one dimension with one available action, i.e., $d=1$ and $n=1$. For this case, the optimization problem in Equation~\ref{eq:outer-sample-simplified} can be simplified as 
\begin{align*}
\min_{w\in\mathbb{R}} \quad f(w) \text{ where } f(w)&=\left(w(x+aw)-(y+bw)\right)^2=\left(aw^2+(x-b)w-y\right)^2,
\end{align*}
where $a=\kappa G_x^2/(2C)$ and 
$b=\kappa G_yG_x/(2C)$ are scalars as $G_x$, $G_y$ and $C (\inv{C}$) all become scalars when $d=n=1$.  The function $f(w)$ is non-convex when $a=1, x=0, b=1, y=1$ (can be verified by plotting $(x^2-x-1)^2$). This combination can be achieved by setting $C=1$, $G_x=1$, $G_y=1$, and $\kappa = 2$.
\end{proof}

\begin{proof}[Proof of Theorem~\ref{pro:rrm-stable}]
 The proof is a direct corollary of Theorem 3.10 in~\cite{PerdomoZMH20} and Lemma~\ref{lem:distribution-sensitive}.
\end{proof}

\begin{proof}[Proof of Theorem~\ref{thm:s-0-dist}]
Under Assumptions~\ref{assumpt:cost},~\ref{assumpt:ell-strong-convex}, ~\ref{assumpt:ell-smooth}, ~and the stated additional assumption, from Theorem 4.3 of \citep{PerdomoZMH20} performatively stable point and performative optimum satisfy
$
\|\wv^s-\wv^*\|_2
\le
\frac{2L_z\epsilon}{\gamma}.
$
Further, by Theorem~\ref{thm:s-0-dist}, under the respective conditions for RRM and RGD, we have
$
\|\wv^T-\wv^s\|_2 \le \delta
$
with probability at least $1-p$ after the stated number of iterations, where $\wv^s$ denotes the performatively stable point. Therefore, by the triangle inequality,
\begin{align}
\|\wv^T-\wv^*\|_2
&\le
\|\wv^T-\wv^s\|_2
+
\|\wv^s-\wv^*\|_2 \nonumber\le
\delta+\frac{2L_z\epsilon}{\gamma}. \nonumber
\end{align}
\end{proof}

\begin{figure*}[t]
    \centering

    \begin{subfigure}[t]{0.48\textwidth}
        \centering
        \includegraphics[width=\linewidth]{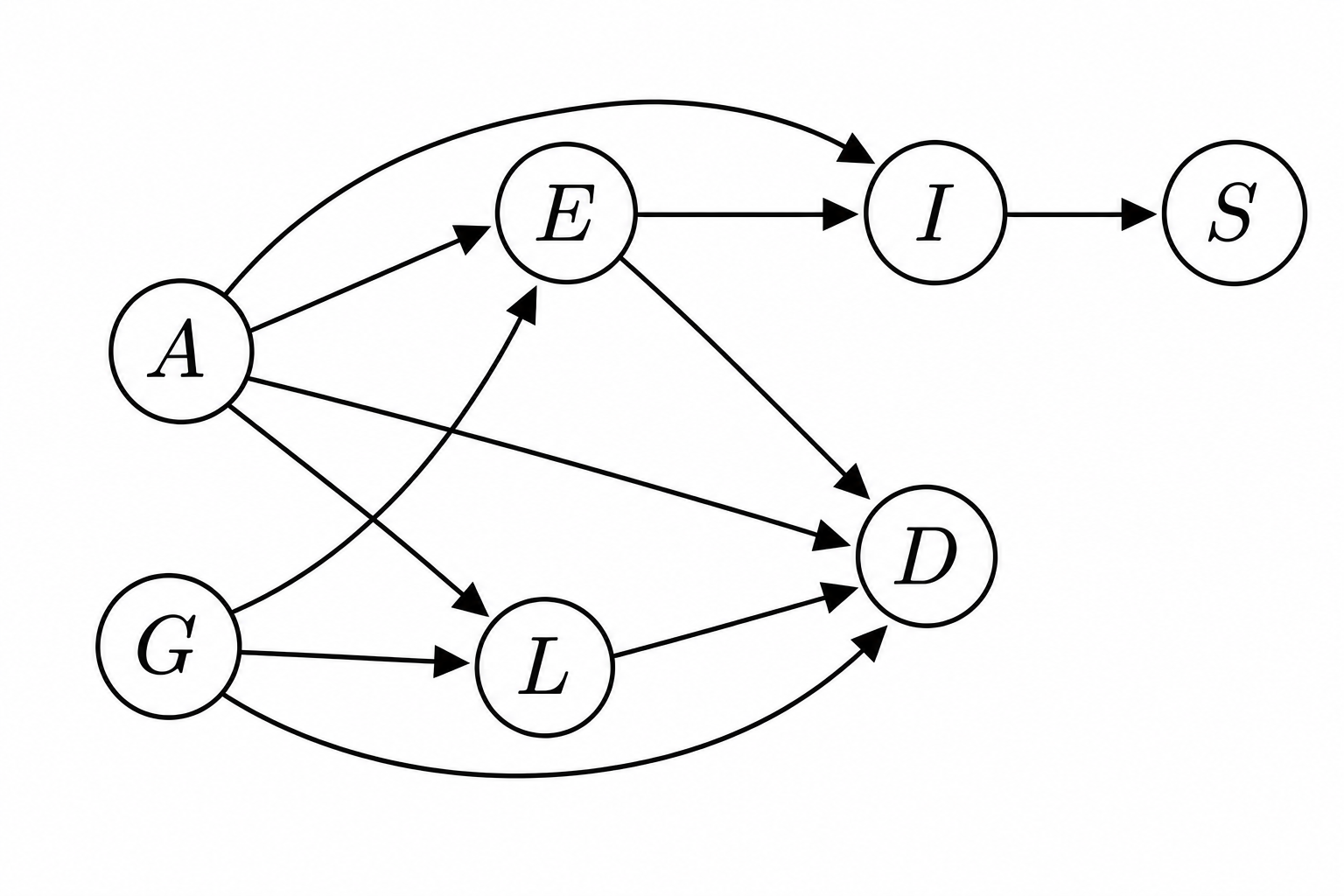}
        \caption{}
        \label{fig:graph-syn}
    \end{subfigure}
    \hfill
    \begin{subfigure}[t]{0.48\textwidth}
        \centering
        \includegraphics[width=\linewidth]{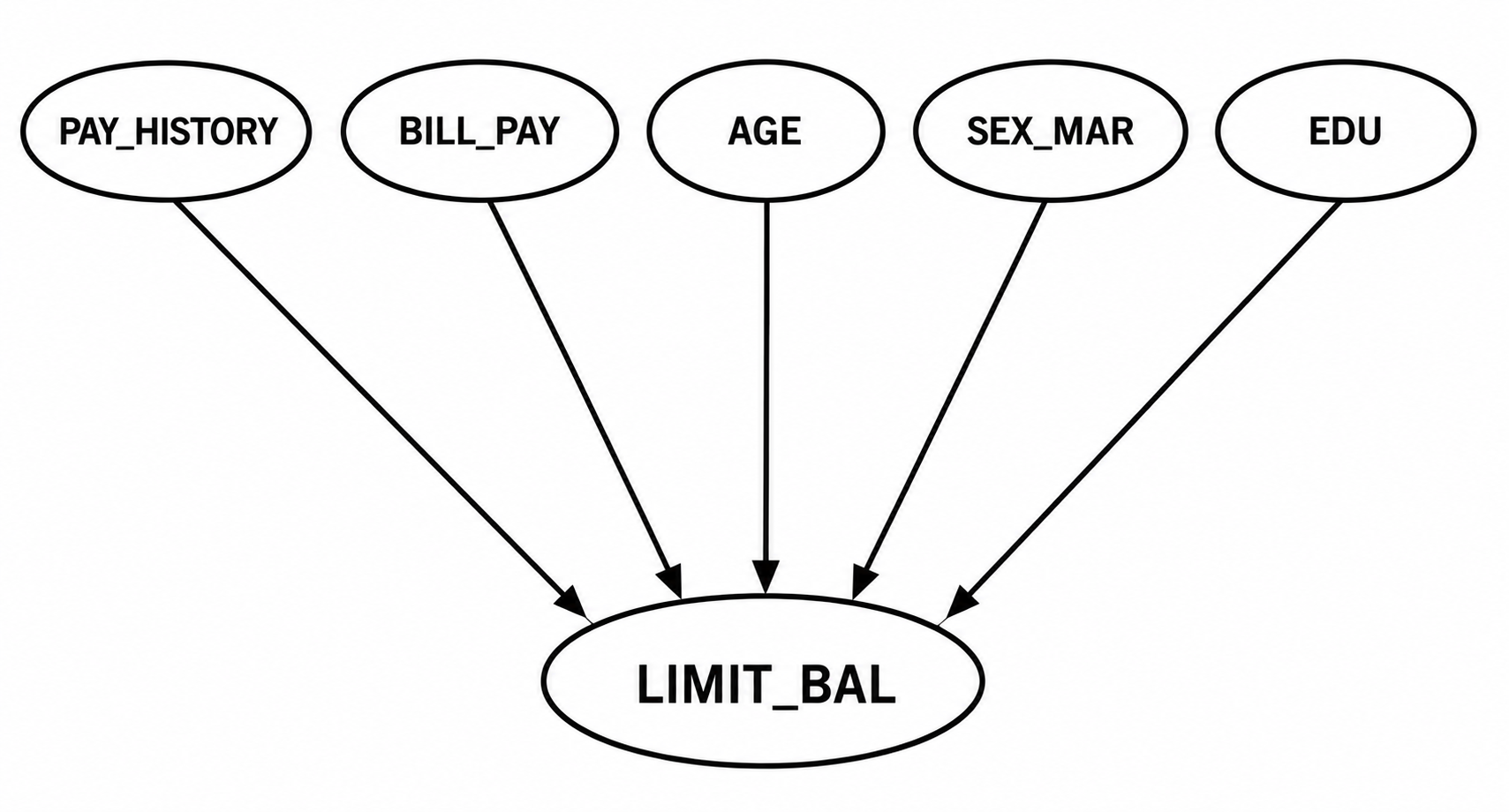}
        \caption{}
        \label{fig:graph-tai}
    \end{subfigure}

    \caption{
    Causal graph for a) Semi-synthetic dataset, and b) Taiwan dataset
    }
    \label{}
\end{figure*}

\subsection{Computation of Theoretical Bounds}
\label{app:bounds}
We next specialize the strong-convexity and joint-smoothness constants to the regularized squared loss used in our experiments. For a sample $(x,y)$, let
\[
\ell(\wv;x,y)
=
(\wv^\top x-y)^2+\rho\|\wv\|_2^2,
\]
and define the empirical objective
\[
L(\wv)
=
\frac{1}{m}\sum_{i=1}^{m}\ell(\wv;x_i,y_i)
=
\frac{1}{m}\|X\wv-y\|_2^2+\rho\|\wv\|_2^2,
\]
where $X\in\mathbb{R}^{m\times d}$ and $y\in\mathbb{R}^m$.

\paragraph{Strong Convexity.}
Since
\[
\nabla_{\wv}^{2}L(\wv)
=
\frac{2}{m}X^\top X+2\rho I
\succeq 2\rho I,
\]
the objective is $\gamma$-strongly convex with
$\gamma\geq 2\rho$. We therefore use the distribution-independent
choice $\gamma=2\rho$.

\paragraph{Joint Smoothness.}
Assume that $\|\wv\|_2\leq B_w$, $\|x\|_2\leq B_x$, and
$|y|\leq B_y$. The gradient of the pointwise loss is
\[
\nabla_{\wv}\ell(\wv;x,y)
=
2(\wv^\top x-y)x+2\rho\wv.
\]
For fixed $(x,y)$, its variation with respect to $\wv$ satisfies
\[
\|\nabla_{\wv}\ell(\wv;x,y)
-
\nabla_{\wv}\ell(\wv';x,y)\|_2
\leq
(2B_x^2+2\rho)\|\wv-\wv'\|_2.
\]
For fixed $\wv$, its variation with respect to the data satisfies
\begin{align}
\left\|
\nabla_{\wv}\ell(\wv;z)
-
\nabla_{\wv}\ell(\wv;z')
\right\|_2
&\le
\left(4B_xB_w+2B_y+2B_x\right)\|z-z'\|_2, \nonumber
\end{align}
where $z=(x,y), z'=(x',y'),$
Thus, a valid joint-smoothness constant is
\[
\beta
=
\max\left\{
2B_x^2+2\rho,\;
4B_xB_w+2B_y+2B_x
\right\}.
\]
For each value of $\kappa$, we record the optimization trajectories of both RRM and RGD. At every iteration, we compute the induced dataset and evaluate the largest feature norm, score magnitude, and model norm. The maximum values observed over the trajectories are used as $B_x$, $B_y$, and $B_w$, respectively.
\section{Datasets}
\label{app:data}
\subsection{Semi-synthetic Dataset}
\label{app:synth}
We use the same structural causal model as \citep{karimi2020algorithmic}, while adapting the labels to a regression setting. The feature vector is given by
\[
x = (G,A,E,L,D,I,S),
\]
where $G$ denotes gender, $A$ age, $E$ education level, $L$ loan amount, $D$ loan duration, $I$ income, and $S$ savings. The variables are generated according to the structural equations
\begin{align*}
G &= 0.5 + \varepsilon_G \\
A &= 0 + \varepsilon_A\\
E &= -0.062 + 0.123G + 0.0062A + \varepsilon_E \\
L &= 1.25 + 0.1A + G + \varepsilon_L \\
D &= 1.25 + 0.1A + 2G + L + \varepsilon_D \\
I &= 0.469 + 0.1A + 1.938G + 0.5E + \varepsilon_I \\
S &= -3.2965 + 1.5I + \varepsilon_S ,
\end{align*}
where $\varepsilon_G \sim \mathrm{Bernoulli}(0.5)$,
$\varepsilon_A \sim \mathrm{Gamma}(10,3.5)$,
$\varepsilon_E \sim \mathcal{N}(0,0.5)$,
$\varepsilon_L \sim \mathcal{N}(0,2)$,
$\varepsilon_D \sim \mathcal{N}(0,3)$,
$\varepsilon_I \sim \mathcal{N}(0,2)$, and
$\varepsilon_S \sim \mathcal{N}(0,5)$ are mutually independent noise variables. The continuous outcome score is generated as
\[
y
=
-0.3\left(-L-D+I+S+IS\right),
\]
which defines the loan-approval score used in the MSE objective.

\subsection{Taiwan Dataset}
\label{app:taiwan}
We construct six features from the Taiwan credit dataset. The feature \texttt{PAY\_HISTORY} is defined as the average of the historical repayment status variables \texttt{PAY\_1},\ldots,\texttt{PAY\_6}. To summarize repayment behavior, we define \texttt{BILL\_PAY}. Let $\overline{\mathrm{BILL}}$ and $\overline{\mathrm{PAY}}$ denote the averages of \texttt{BILL\_AMT1},\ldots,\texttt{BILL\_AMT6} and \texttt{PAY\_AMT1},\ldots,\texttt{PAY\_AMT6}, respectively, and let \texttt{LIMIT\_BAL} denote the assigned credit limit. Following the preprocessing of~\cite{pitso2026bayesian}, we define
\[
\texttt{BILL\_PAY}
=
\frac{1}{2}\frac{\overline{\mathrm{BILL}}}{\texttt{LIMIT\_BAL}}
+
\frac{1}{2}\frac{\overline{\mathrm{PAY}}}{\overline{\mathrm{BILL}}}.
\]
The remaining features are \texttt{AGE}, \texttt{LIMIT\_BAL}, \texttt{EDU}, obtained directly from the education category, and \texttt{SEX\_MAR}, formed by combining the sex and marital-status variables. Samples containing invalid or missing values are discarded.

\paragraph{Continuous label generation.}
The original Taiwan credit dataset provides a binary default label. Since our objective is to model how repayment likelihood changes under strategic feature modifications, we instead construct a continuous target. Let $X\in\mathbb{R}^{n\times d}$ denote the feature matrix and $Y\in\{0,1\}^n$ the default labels. After standardizing the features, we fit an $\ell_2$-regularized logistic regression model to predict $Y$ and use the predicted probabilities as continuous semisynthetic labels,
$\tilde y_i=\Pr(Y_i=1\mid x_i)$.

\begin{figure*}[t]
    \centering

    \includegraphics[width=\textwidth]{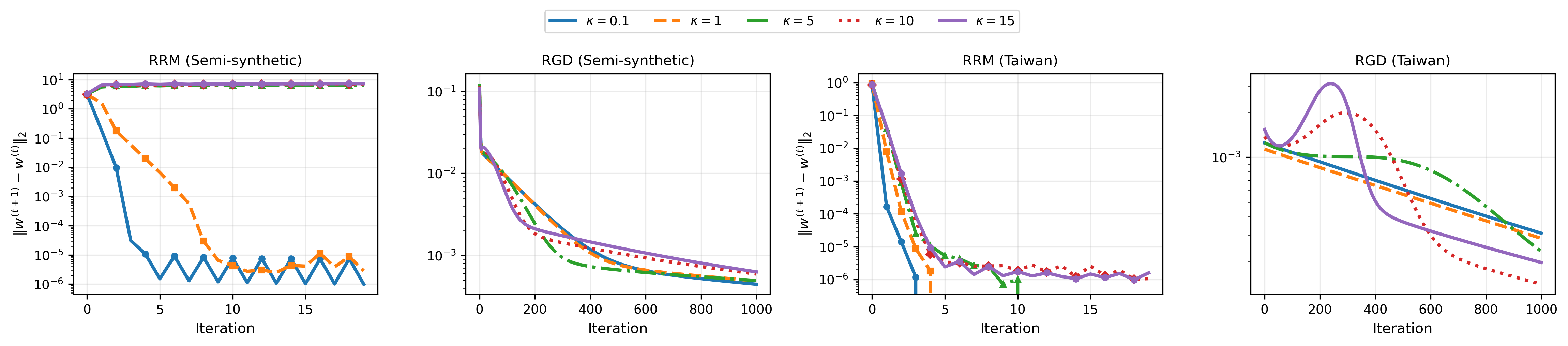}

    \vspace{0.3em}

    \includegraphics[width=\textwidth]{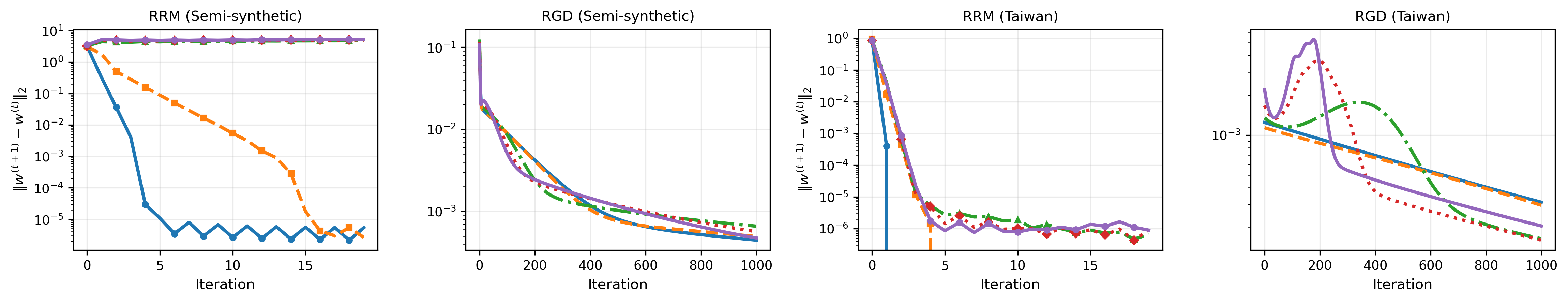}

    \vspace{0.3em}

    \includegraphics[width=\textwidth]{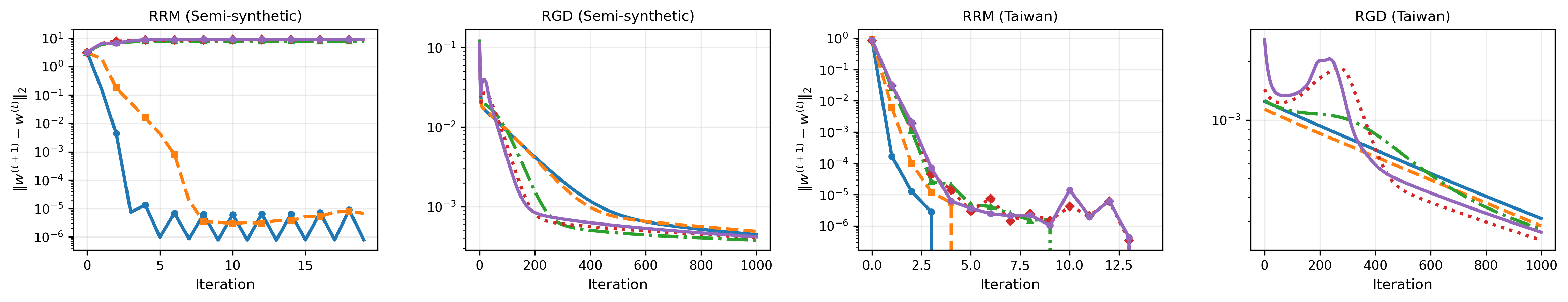}

\caption{
Convergence behavior of RRM and RGD under randomized cost matrices for the semi-synthetic and Taiwan datasets. The columns correspond to RRM (semi-synthetic), RGD (semi-synthetic), RRM (Taiwan), and RGD (Taiwan), respectively. Solid curves denote the mean over randomized cost realizations. The top, middle, and bottom rows correspond to the uniform, upstream-cheaper, and downstream-cheaper cost profiles, respectively (Table~\ref{tab:cost_profiles}).
}
    \label{fig:all-convergence}
\end{figure*}

\begin{figure*}[t!]
    \centering

    \begin{subfigure}[t]{0.32\textwidth}
        \centering
        \includegraphics[width=\linewidth]
        {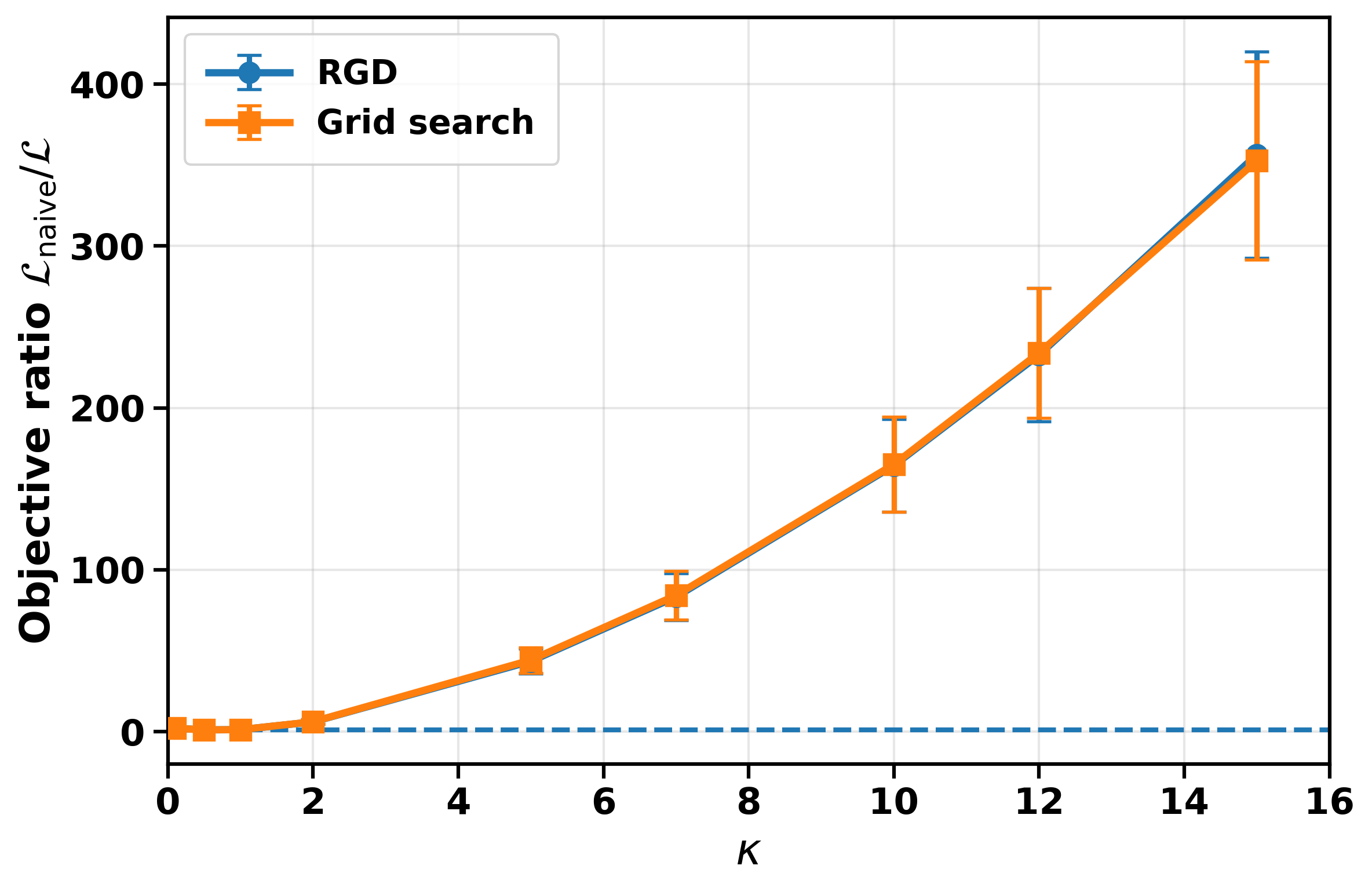}
    \end{subfigure}
    \hfill
    \begin{subfigure}[t]{0.32\textwidth}
        \centering
        \includegraphics[width=\linewidth]
        {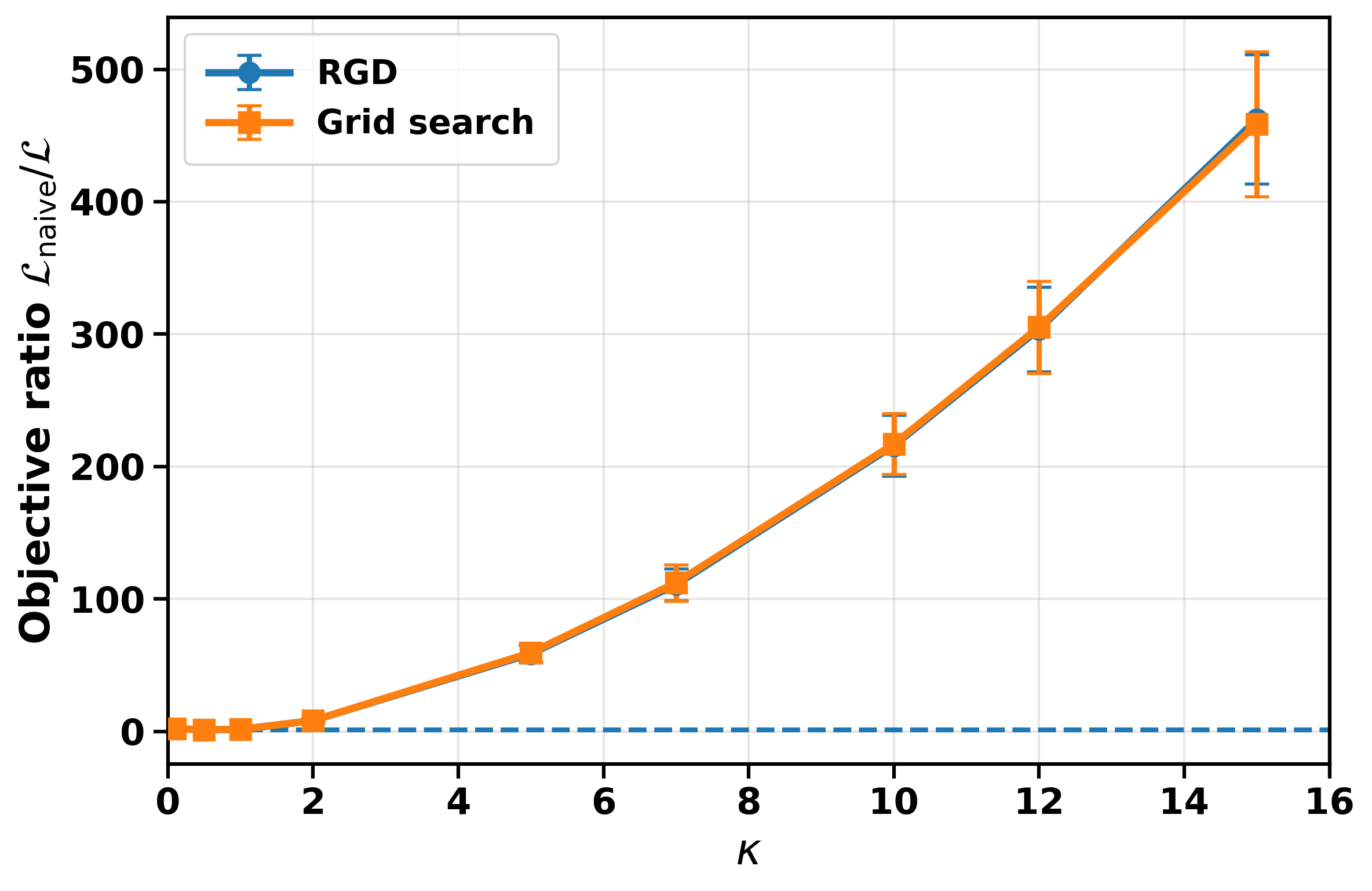}
    \end{subfigure}
    \hfill
    \begin{subfigure}[t]{0.32\textwidth}
        \centering
        \includegraphics[width=\linewidth]
        {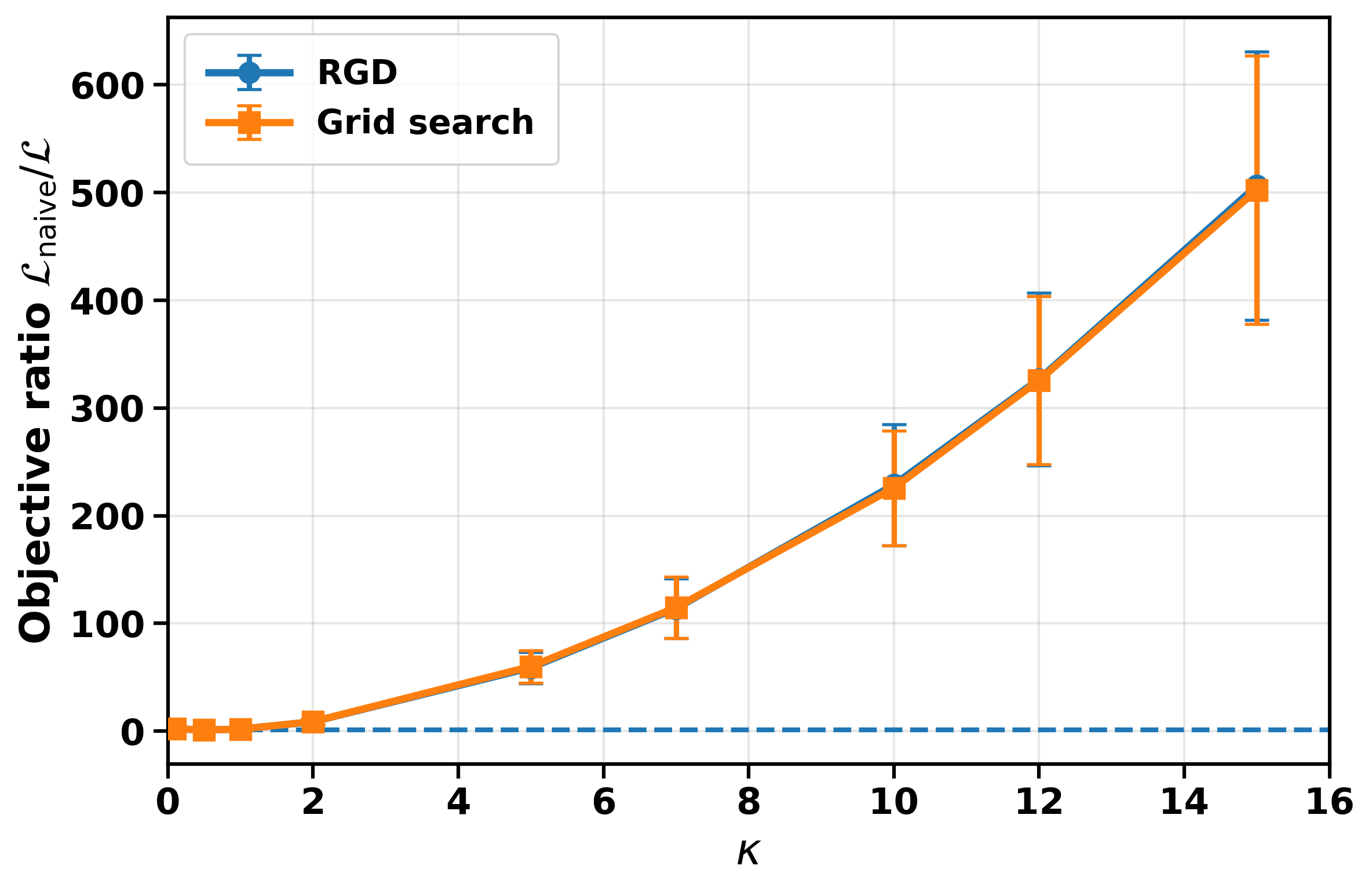}
    \end{subfigure}

    \vspace{0.6em}

    \begin{subfigure}[t]{0.32\textwidth}
        \centering
        \includegraphics[width=\linewidth]
        {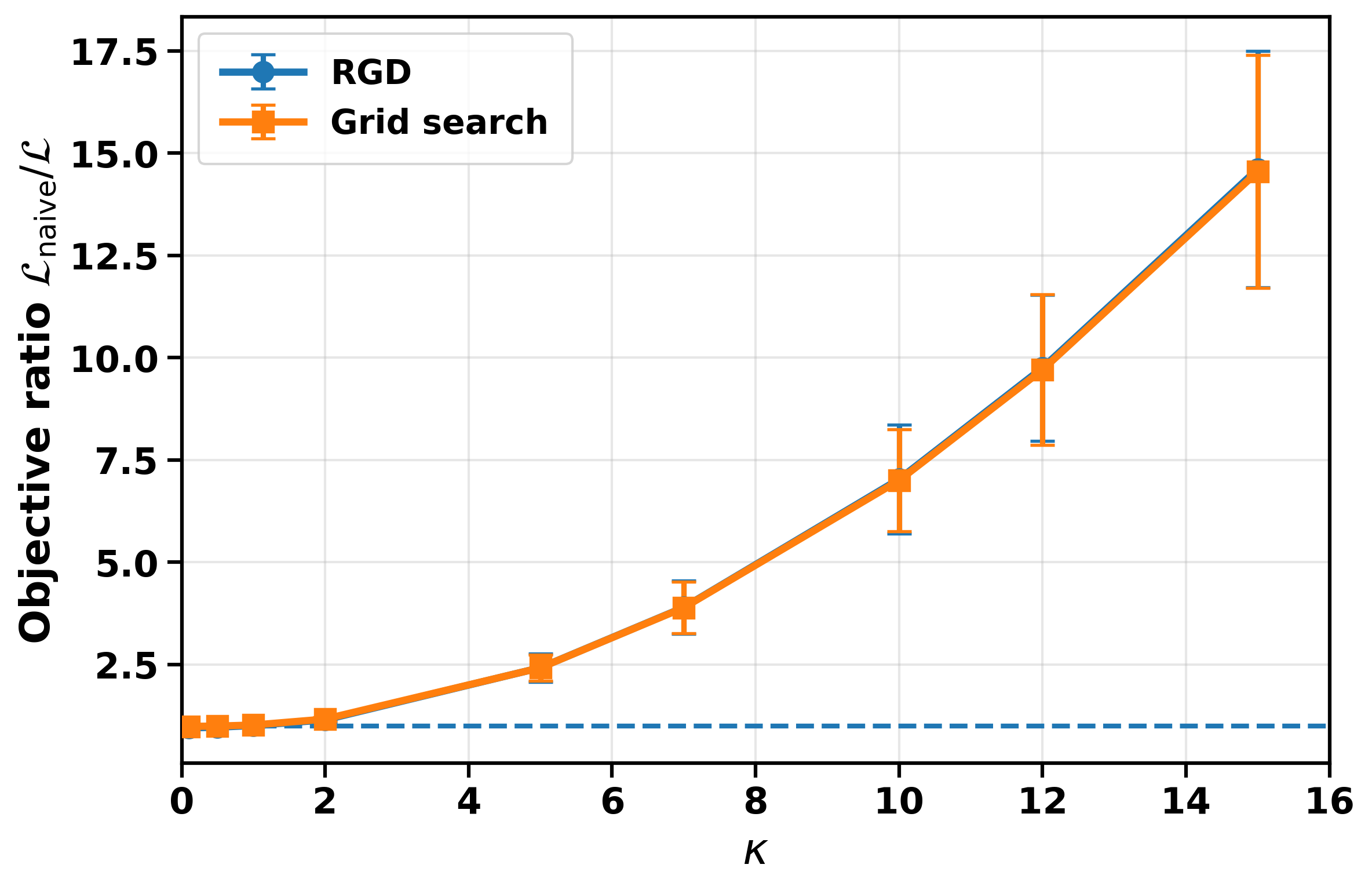}
        \caption{Uniform costs}
    \end{subfigure}
    \hfill
    \begin{subfigure}[t]{0.32\textwidth}
        \centering
        \includegraphics[width=\linewidth]
        {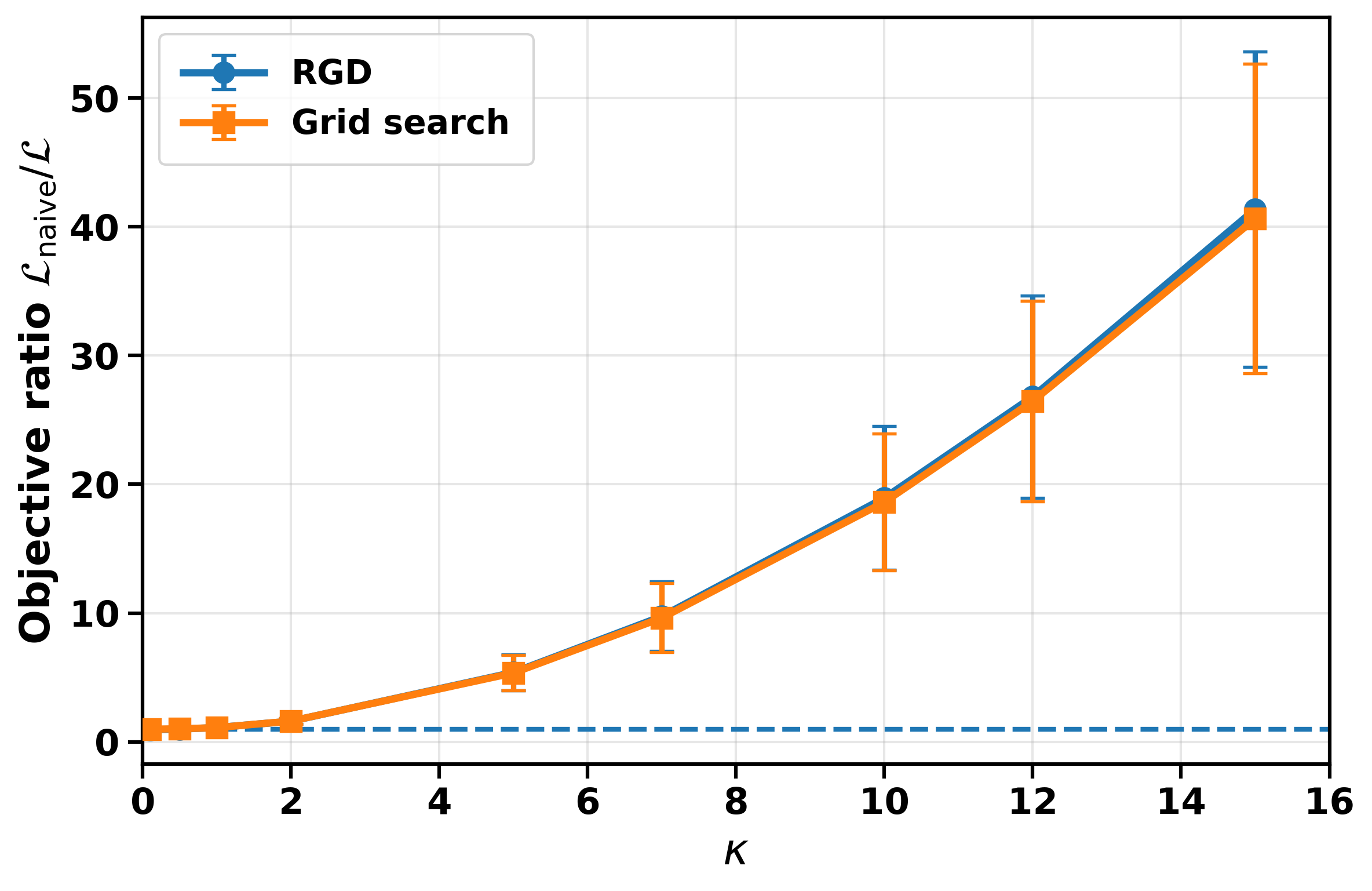}
        \caption{Upstream features cheap}
    \end{subfigure}
    \hfill
    \begin{subfigure}[t]{0.32\textwidth}
        \centering
        \includegraphics[width=\linewidth]
        {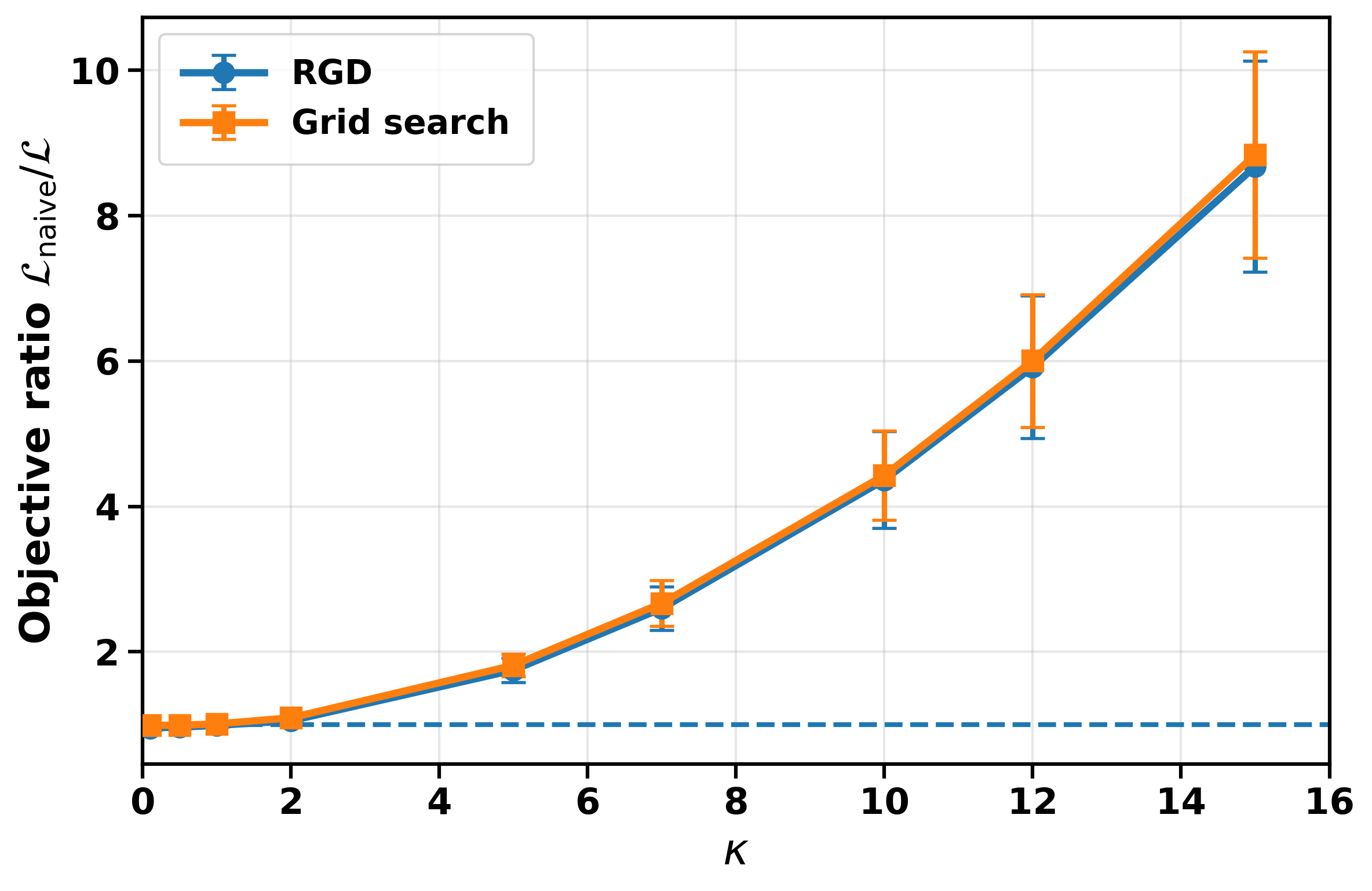}
        \caption{Downstream features cheap}
    \end{subfigure}

    \caption{
  Ratio of the performative objective under the naive model to that under the stable (RGD) and Grid-search models across three intervention cost profiles. The top row
    corresponds to the semi-synthetic dataset, while the bottom row
    corresponds to the Taiwan credit dataset. From left to right, the
    columns represent uniform costs, upstream features being cheaper to
    intervene on, and downstream features being cheaper to intervene on.
    }
    \label{fig:improvement-ratio-cost-profiles}
\end{figure*}

\begin{figure*}[t!]
    \centering

    \begin{subfigure}[t]{0.24\textwidth}
        \centering
        \includegraphics[width=\linewidth]
        {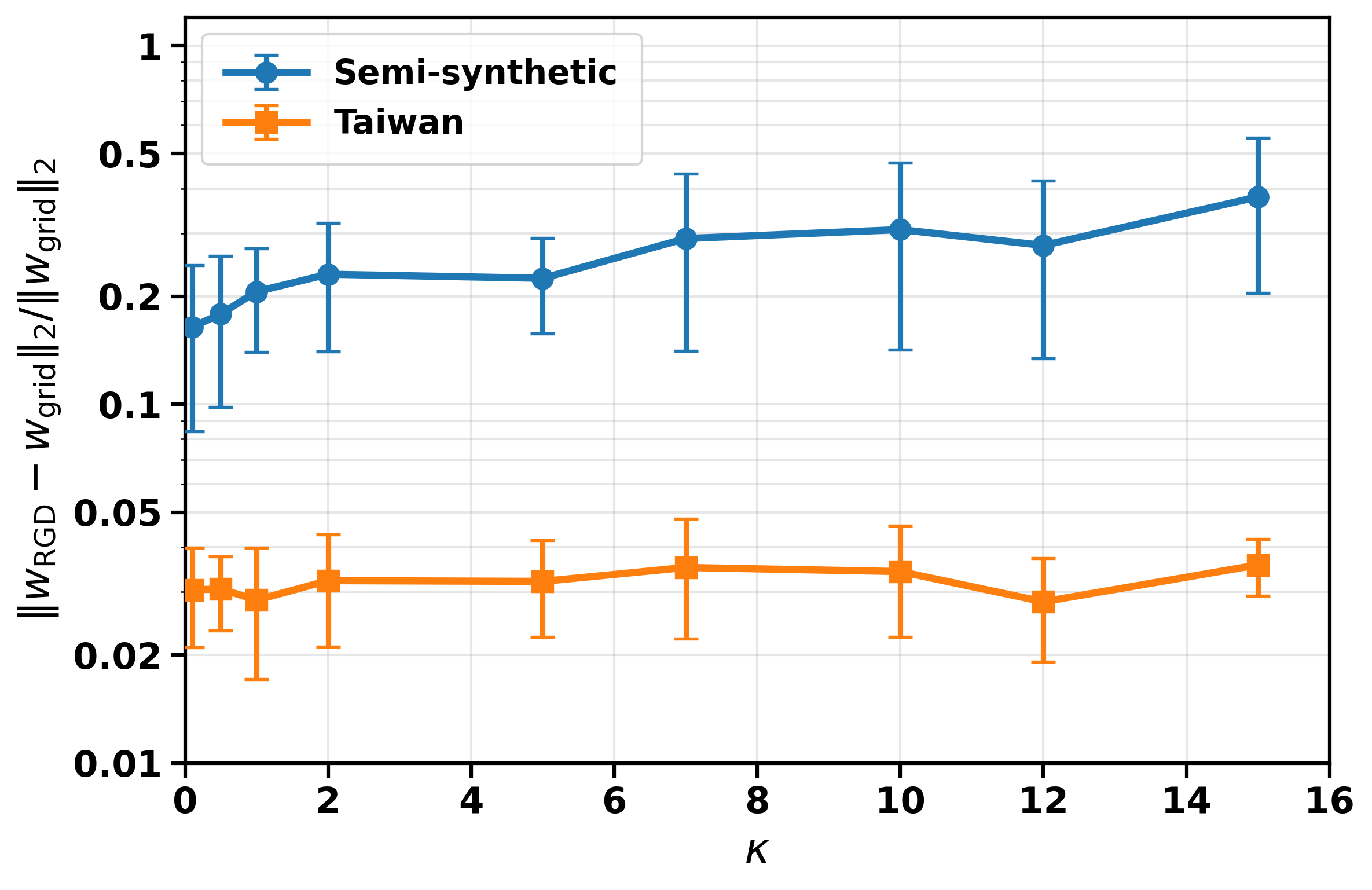}
        \caption{Default costs}
        \label{fig:rel-dist-default}
    \end{subfigure}
    \hfill
    \begin{subfigure}[t]{0.24\textwidth}
        \centering
        \includegraphics[width=\linewidth]
        {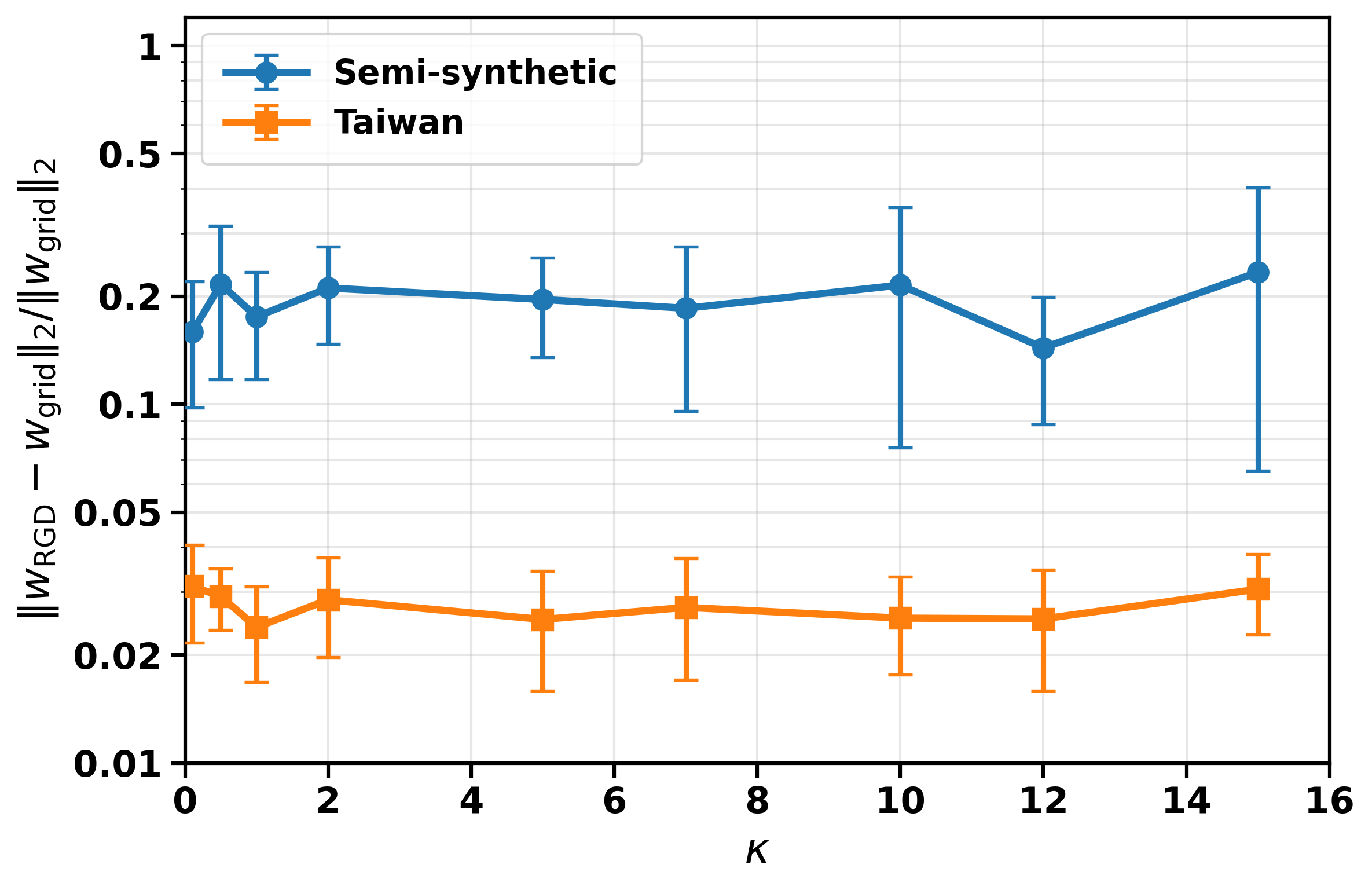}
        \caption{Uniform costs}
        \label{fig:rel-dist-uniform}
    \end{subfigure}
    \hfill
    \begin{subfigure}[t]{0.24\textwidth}
        \centering
        \includegraphics[width=\linewidth]
        {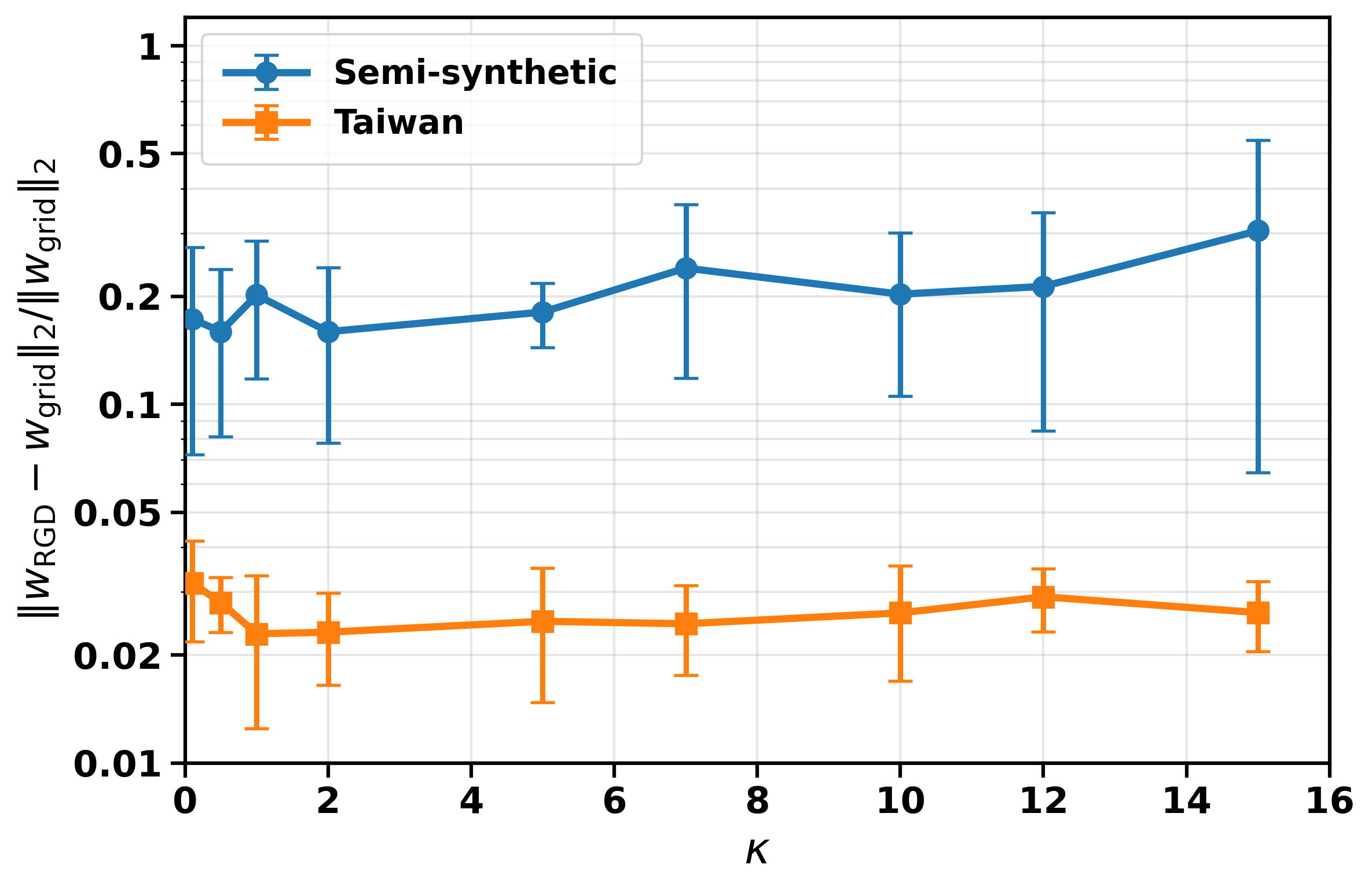}
        \caption{Upstream features cheap}
        \label{fig:rel-dist-upstream}
    \end{subfigure}
    \hfill
    \begin{subfigure}[t]{0.24\textwidth}
        \centering
        \includegraphics[width=\linewidth]
        {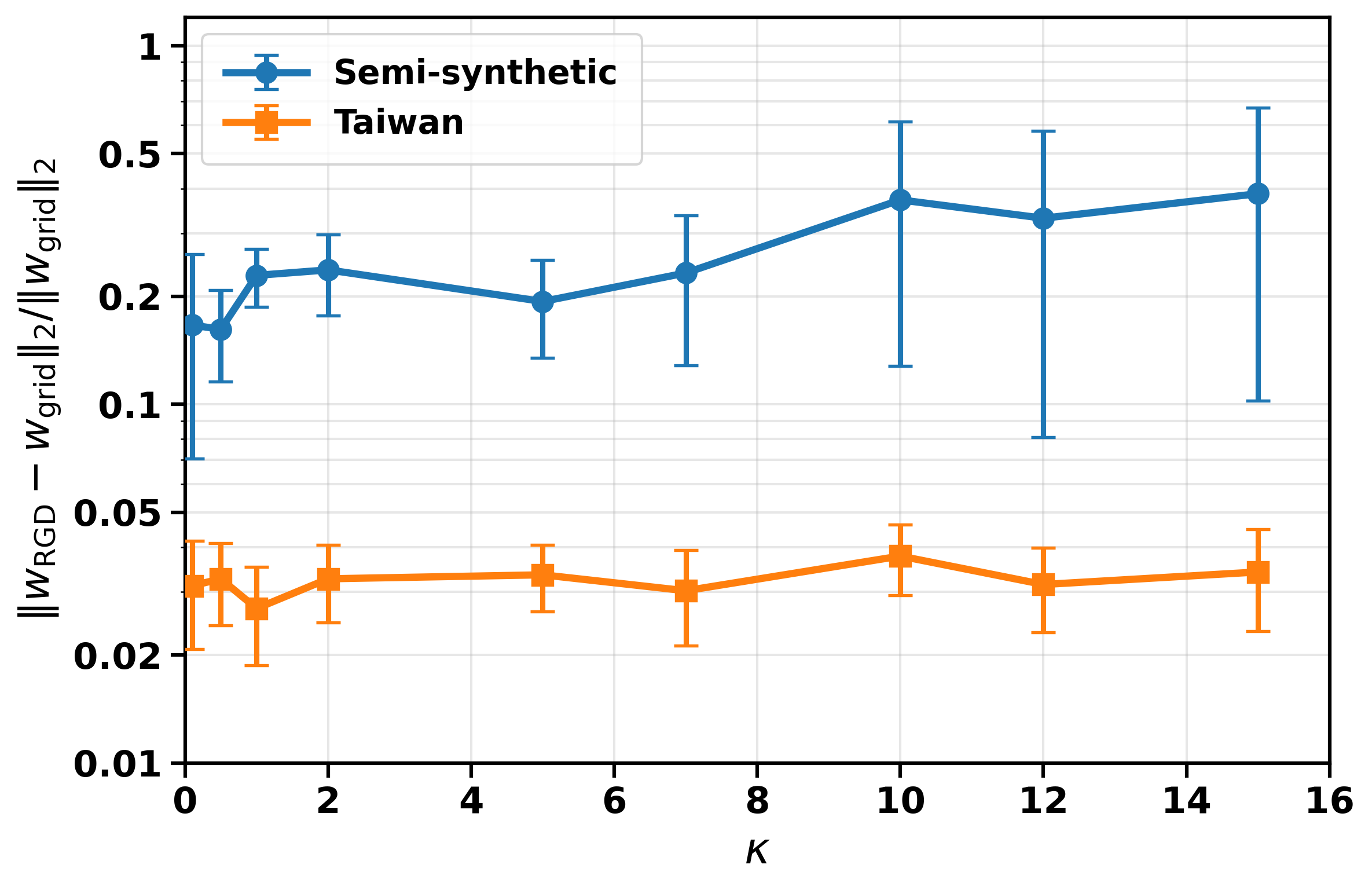}
        \caption{Downstream features cheap}
        \label{fig:rel-dist-downstream}
    \end{subfigure}

    \caption{
    Relative model-weights' distance
    $\|w_{\mathrm{RGD}}-w_{\mathrm{grid}}\|_2/\|w_{\mathrm{grid}}\|_2$
    as a function of the performativity parameter $\kappa$ under four intervention cost profiles. Each panel reports results for both the semi-synthetic and Taiwan datasets. Error bars denote one standard deviation over randomized cost matrix realizations.
    }
    \label{fig:relative-classifier-distance-cost-profiles}
\end{figure*}

\section{Additional Experimental Results}
\label{app:sensitiv}
We additionally evaluate the remaining cost profiles listed in Table~\ref{tab:cost_profiles}. Figure~\ref{fig:all-convergence} shows the convergence behavior of the proposed algorithms, Figure~\ref{fig:improvement-ratio-cost-profiles} reports the corresponding improvement ratio, and Figure~\ref{fig:relative-classifier-distance-cost-profiles} presents the relative distance between the stable (RGD) and near-optimal (Grid-search) models. Across all cost profiles, we observe behavior that is qualitatively consistent with our main experiments. In particular, the dependence of convergence, performative improvement, and the relative model distance on the performativity parameter $\kappa$ remains largely unchanged, indicating that the proposed algorithms are robust to different specifications of the action cost matrix.

\end{document}